\documentclass{article} 
\usepackage{iclr2027_conference_arxiv,times}

\usepackage{amsmath,amsfonts,bm}

\def\eqref#1{equation~\ref{#1}}

\def\1{\bm{1}}

\DeclareMathAlphabet{\mathsfit}{\encodingdefault}{\sfdefault}{m}{sl}
\SetMathAlphabet{\mathsfit}{bold}{\encodingdefault}{\sfdefault}{bx}{n}

\newcommand{\E}{\mathbb{E}}

\newcommand{\R}{\mathbb{R}}

\newcommand{\Var}{\mathrm{Var}}

\usepackage{hyperref}
\usepackage{url}

\usepackage{booktabs}       
\usepackage{amsfonts}       
\usepackage{nicefrac}       
\usepackage{microtype}      
\usepackage{xcolor}         
\usepackage{amsmath,amssymb,amsthm,mathtools}
\usepackage{algorithm}
\usepackage{algorithmic}
\usepackage{booktabs}
\usepackage{graphicx}
\usepackage{subcaption}
\usepackage{microtype}
\usepackage{enumitem}

\newtheorem{proposition}{Proposition}

\newtheorem{theorem}{Theorem}
\newtheorem{corollary}{Corollary}

\newcommand{\relu}{\operatorname{ReLU}}

\newcommand{\inc}{\mathrm{inc}}
\newcommand{\ADIP}{\textsc{ADIP}}
\newcommand{\SCIP}{\textsc{SCIP}}

\title{Autoregressive Differentiable Method for Integer Programming}

\author{Ouns El Harzli\\
BCG X AI Science Institute\\
Boston, USA \\
\texttt{elharzli.ouns@bcg.com} \\
\And
Yudong Cao \\
BCG X AI Science Institute \& Zapata Quantum\\
Boston, USA\\
\texttt{david.yudong.cao@gmail.com} \\
}

\iclrfinalcopy 
\begin{document}

\maketitle

\begin{abstract}
We introduce an autoregressive differentiable method to solve 0-1 integer programs. We fix an arbitrary order of the binary variables and we train a transformer to predict the next bit while remaining in the feasible set. Our method is first trained on feasible incumbents provided by any solver, thus allowing us to initialize the transformer in the feasible set. Our procedure then implements a Lagrangian penalty to penalize infeasible solutions, and the transformer is further trained to explore the feasible set using Gumbel-softmax activations on the relaxed objective. We have tested our method on non-convex instances of quadratic knapsack problem and demonstrated consistent improvement upon state-of-the-art open-source solvers for dense problems up to 10,000 binary variables. In particular, we empirically demonstrate a phenomenon akin to a tunneling effect where the effective change of variables from binary variable to the continuous weights of the transformer that the method implements enables crossing barriers in the relaxed objective landscape.
\end{abstract}

\section{Introduction}

Binary integer programming is a central modeling language for combinatorial optimization \citep{nemhauser1988integer}. Problems including knapsack \citep{martello1990knapsack},
maximum cut \citep{goemans1995improved},
facility location \citep{cornuejols1990uncapacitated},
sparse regression \citep{bertsimas2016bestsubset},
and scheduling \citep{pinedo2016scheduling}
can be formulated over binary vectors, yet many practically relevant instances remain computationally challenging \citep{belotti2013mixed,bertsimas2016bestsubset,bixby2012brief,lodi2010mixed,gleixner2021miplib}. Modern branch-and-cut solvers and primal heuristics are extremely powerful, but can still spend substantial time improving a good feasible incumbent \citep{berthold2013measuring,fischetti2003local,rothberg2007evolutionary}.

We ask whether autoregressive differentiable models can serve as enhanced primal mechanisms for binary integer programs \citep{vinyals2015pointer,bello2017neural,geng2025differentiable}. Rather than predicting a solution from scratch, we initialize a neural model at a feasible incumbent obtained from a standard solver \citep{nair2020solving,khalil2022mipgnn,han2023gnn}, then optimize it directly against a differentiable relaxation of the original objective with a Lagrangian penalty for constraint violation \citep{dipillo1989exact}. The resulting method searches in neural network weight space rather than directly in the decision variables.
This formulation is intended to complement conventional integer programming machinery: the solver supplies a feasible point and the neural reparameterization is used only for subsequent primal improvement. The central question is whether the nonlinear, shared autoregressive parametrization exposes useful optimization directions that differ from direct coordinate-wise movement in the binary decision space.

Our starting point is deliberately instance-specific. The neural network is not trained to generalize across a dataset of optimization instances; instead, it is initialized at a solution already found by a solver and used as a local improvement mechanism. This separates feasibility discovery from neural improvement and lets the transformer exploit a high-quality incumbent without learning the combinatorial problem from scratch.

We focus on binary quadratic integer programs
\begin{equation}
    \max_{x\in\{0,1\}^n} x^\top Qx \qquad \text{s.t.}\qquad Ax\le b,
    \label{eq:bqip}
\end{equation}
where $Q\in\R^{n\times n}$, $A\in\R^{m\times n}$, and $b\in\R^m$. Our experimental testbed is the quadratic knapsack problem (QKP), following and extending \citet{lopezpiqueres2025constraining}. QKP is known to remain strongly NP-hard even with small integer weights \citep{gallo1980quadratic,pisinger2007solution}, and arbitrary $Q$ yields a non-convex relaxed landscape. Although we experiment with non-convex quadratic objectives, the method applies whenever the binary objective admits a differentiable continuous relaxation.

Our method, which we call Autoregressive Differentiable Integer Progamming (\ADIP{}), has two phases. We fix an arbitrary variable ordering and use a causal transformer to learn
$p_\theta(x)=\prod_{i=1}^n p_\theta(x_i\mid x_{<i})$. For each instance, an open-source solver such as \SCIP{} \citep{hojny2025scip10}, HiGHS \citep{huangfu2018highs}, CP-SAT \citep{perron2023ortools}, or CBC \citep{forrest2005cbc,cbc2026} first produces a feasible incumbent $x^{\inc}$. In Phase~1, the transformer is trained to reproduce $x^{\inc}$ using targets $\varepsilon$ and $1-\varepsilon$ (instead of $0$ and $1$), preventing softmax saturation and preserving gradients. In Phase~2, the same weights are optimized against
\begin{equation}
\mathcal{J}_\lambda(\theta)=\tilde{x}_\theta^\top Q\tilde{x}_\theta-\lambda\sum_{j=1}^m\relu((A\tilde{x}_\theta-b)_j),
\label{eq:lagrangian_objective_intro}
\end{equation}
where $\tilde{x}_\theta\in[0,1]^n$ is generated autoregressively. Gumbel-softmax activations \citep{jang2017categorical,maddison2017concrete} encourage near-discrete exploration, while the best feasible hard solution encountered is retained throughout training.

For the purpose of advancing open-source software and open science, we deliberately chose to compare against state-of-the-art open-source solvers, and not against their closed-source competitors. Our contributions are fourfold. First, we introduce an autoregressive nonlinear reparametrization of relaxed binary decision variables for integer programs, in which the shared parameters of the autoregressive model themselves become the optimization variables. Second, we present our specific method \ADIP{} combining transformer-based incumbent imitation, finite-logit targets, Gumbel-softmax exploration, and a Lagrangian feasibility penalty in a two-phase primal-improvement procedure. Third, we provide theoretical results showing that incumbent improvement remains NP-hard, that the transformer reparametrization retains the binary search space, and that parameter-space gradient flow induces a state-dependent geometry in decision space. Fourth, we evaluate \ADIP{} on random QKP instances under matched computational budgets and study whether its trajectories reach attraction basins different from those reached by combinatorial optimization or by direct relaxed optimization, uncovering a ``tunneling-like'' phenomenon.

\begin{figure*}[t]
    \centering
    \includegraphics[width=\textwidth]{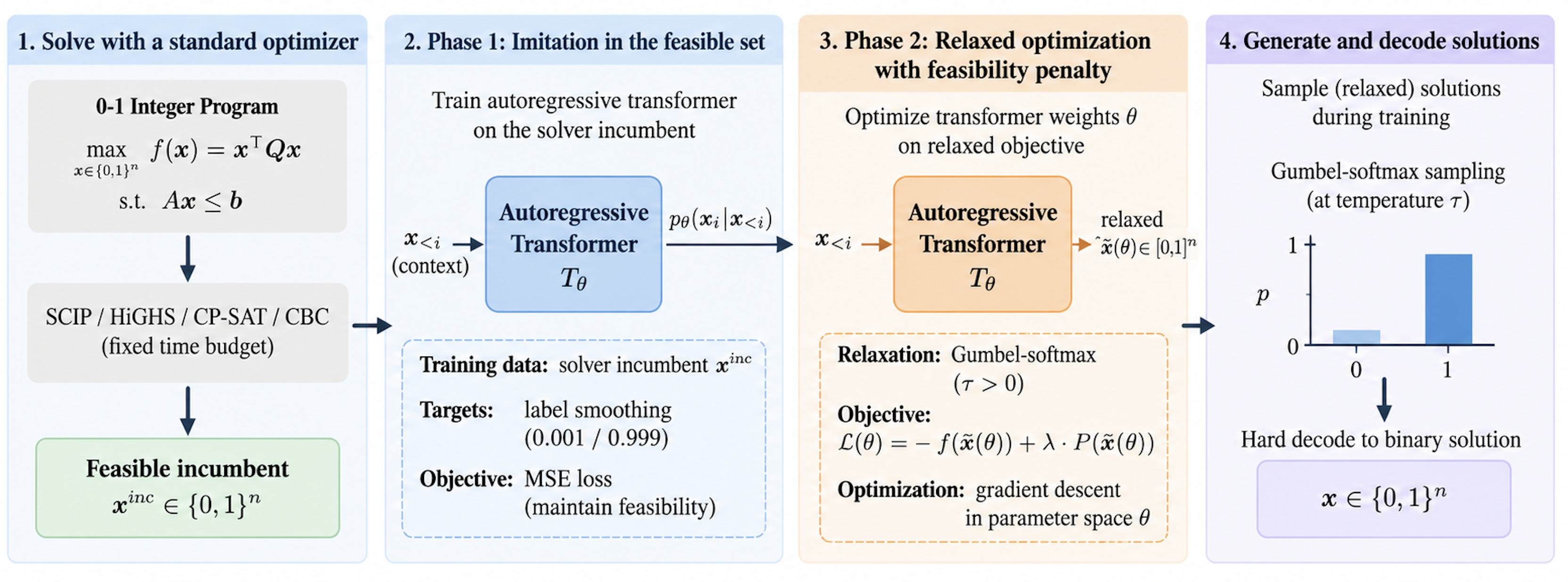}
    \caption{
    \textbf{Overview of ADIP.}
    Starting from a feasible incumbent $x^{\mathrm{inc}}$ obtained from a standard
    integer-programming solver, Phase~1 trains an autoregressive transformer to
    reproduce the incumbent using smoothed binary targets.
    Phase~2 optimizes the transformer parameters directly against the relaxed
    integer-program objective with a Lagrangian feasibility penalty, using
    Gumbel-softmax to maintain near-discrete outputs.
    Hard candidates are decoded throughout training, and the best feasible
    solution encountered is returned.
    }
    \label{fig:adip_overview}
\end{figure*}

\section{Related work}
Our method is closely related in motivation to Differentiable Integer Linear Programming (DiffILO) \citep{geng2025differentiable}, which constructs a probabilistic continuous reformulation of an ILP. The two approaches differ in how differentiability is obtained and in the role of the neural parametrization. DiffILO associates independent Bernoulli parameters with binary decisions and
derives a stochastic constraint surrogate. \citet{sornwanee2026diffilo} shows that its
sample-wise loss contains a discontinuous indicator, so the differentiation-expectation interchange used to justify its stochastic gradient is generally
invalid and omits boundary terms that are not scalable for general constraints.
ADIP instead differentiates an explicit continuous Gumbel-softmax
autoregressive relaxation \citep{jang2017categorical,maddison2017concrete}.

Autoregressive models have previously been used for combinatorial optimization and mixed-integer programming. \citet{tang2025autoregressive} use autoregressive reinforcement learning to sequentially fix MILP variables before a conventional solver completes the solution, while variational neural annealing uses autoregressive distributions for Ising-type combinatorial optimization \citep{hibatallah2021variational}. In contrast, \ADIP{} first fits an autoregressive model to a feasible solver incumbent and then makes its shared parameters the optimization variables themselves: the original relaxed objective is backpropagated through the nonlinear map $\theta\mapsto\tilde{x}_\theta$. To the best of our knowledge, this combination of incumbent initialization, autoregressive reparametrization, and direct differentiable optimization of the integer-program objective has not previously been proposed for general binary integer programming. 

ADIP is also related to quantum-inspired generative optimization.
Constrained tensor networks \citep{lopezpiqueres2025constraining} encode
linear constraints exactly into matrix product states and optimize a distribution supported
only on feasible configurations; their QKP benchmark directly motivates ours.
In contrast, ADIP does not encode the feasible set architecturally, but
initializes from a feasible incumbent and optimizes through a Lagrangian
penalty. The Generative Quantum Eigensolver
\citep{nakaji2024generative} likewise places the trainable parameters in a
classical autoregressive model, but uses the transformer to generate quantum
circuits rather than the decision variables themselves. Quantum Hamiltonian
Descent \citep{leng2023quantum} provides a complementary quantum mechanism
for traversing non-convex landscapes via Hamiltonian dynamics and motivates
our tunneling analogy.

\section{Preliminaries}
We consider Eq.~\ref{eq:bqip} with feasible set $\mathcal F=\{x\in\{0,1\}^n:Ax\le b\}$ and continuous relaxation $\tilde{x}\in[0,1]^n$. Under a fixed variable ordering, a causal transformer $T_\theta$ \citep{vaswani2017attention} factorizes
\begin{equation}
p_\theta(x)=\prod_{i=1}^n p_\theta(x_i\mid x_{<i}).
\label{eq:autoregressive_factorization}
\end{equation}
We use a causal transformer with alphabet $\{"0", "1"\}$, and for differentiable self-generation, a relaxed bit is embedded via token embeddings $e_0, e_1$ as
\begin{equation}
e(\tilde{x}_i)=(1-\tilde{x}_i)e_0+\tilde{x}_i e_1,
\label{eq:relaxed_embedding}
\end{equation}
which allows gradients to propagate through the generated prefix $(\tilde{x}_{\theta})_{<i}$.

For logits $\ell_{i,0}, \ell_{i,1} \in\R$, independent Gumbel noises $g_{i,0},g_{i,1}$, and temperature $\tau>0$, Gumbel--softmax \citep{jang2017categorical} gives
\begin{equation}
\pi_{i,k}=\frac{\exp((\ell_{i,k}+g_{i,k})/\tau)}{\sum_{r\in\{0,1\}}\exp((\ell_{i,r}+g_{i,r})/\tau)},\qquad \tilde{x}_i=\pi_{i,1}.
\label{eq:gumbel_softmax}
\end{equation}
In our experiments we use the straight-through variant: the forward pass uses $\hat{x}_i=\mathbf 1\{\pi_{i,1}\ge\pi_{i,0}\}$, while the backward pass differentiates the continuous relaxation in Eq.~\ref{eq:gumbel_softmax}.

\section{Methodology}

We now describe \ADIP{}, the proposed autoregressive differentiable method for binary integer programming. 
The method has two phases. 
The first phase initializes a transformer at a feasible incumbent found by an external solver. 
The second phase optimizes the transformer weights using a differentiable relaxation of the objective with a feasibility penalty.

\subsection{Overview}
Given an instance of Eq.~\ref{eq:bqip}, \ADIP{} first runs a conventional solver for $T_{\mathrm{seed}}$ and extracts its best feasible incumbent $x^{\inc}$. A causal transformer is then trained under teacher forcing to reproduce this incumbent. Starting from those trained weights, Phase~2 generates a relaxed sequence autoregressively with Gumbel--softmax and ascends the penalized objective, while every hard-decoded candidate is checked against the original constraints and the best feasible solution is retained. Algorithm~\ref{alg:adip} summarizes the procedure.
\begin{algorithm}[t]
\caption{\ADIP{}: Autoregressive Differentiable Integer Programming}
\label{alg:adip}
\small
\begin{algorithmic}[1]
\REQUIRE $(Q,A,b)$, solver $\mathcal S$, $T_{\mathrm{seed}}$, $K_1,K_2$, penalty $\lambda$, temperature $\tau$
\STATE Run $\mathcal S$ for $T_{\mathrm{seed}}$; let $x^{\inc}$ be its best feasible incumbent
\STATE Initialize transformer parameters $\theta$
\FOR{$k=1,\ldots,K_1$}
\STATE Update $\theta$ to minimize $\mathcal L_{\mathrm{init}}$ in Eq.~\ref{eq:init_loss} under teacher forcing
\ENDFOR
\STATE $x^\star\leftarrow x^{\inc}$
\FOR{$k=1,\ldots,K_2$}
\STATE Generate $\tilde{x}^{(k)}_\theta$ autoregressively using Gumbel--softmax temperature $\tau$
\STATE Ascend $\nabla_\theta\mathcal J_\lambda(\theta)$ from Eq.~\ref{eq:phase2_objective}
\STATE Decode $\hat{x}^{(k)}$ and update $x^\star$ if it is feasible and improves the objective
\ENDFOR
\RETURN $x^\star$
\end{algorithmic}
\end{algorithm}

\subsection{Complexity of incumbent improvement}
It is worth noting that starting from a feasible incumbent does not make the residual search problem easy.
\begin{theorem}[Hardness of incumbent improvement]\label{thm:incumbent-hardness}
For QKP with integer-valued objective coefficients and strictly positive integer weights, given a feasible incumbent $x^{\inc}$, deciding whether there exists a feasible $x$ with $x^\top Qx>(x^{\inc})^\top Qx^{\inc}$ is NP-complete.
\end{theorem}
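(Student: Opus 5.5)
Membership in NP is immediate, so the work is the hardness direction. A certificate is a vector $x\in\{0,1\}^n$. Checking $Ax\le b$ and comparing $x^\top Qx$ with $(x^{\inc})^\top Qx^{\inc}$ takes polynomially many arithmetic operations on integers of polynomial bit length.

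For hardness I will reduce from the decision version of QKP, which is strongly NP-hard \citep{gallo1980quadratic,pisinger2007solution}: given $(Q,w,c)$ with integer profits, positive integer weights, and a threshold $K$, decide whether some feasible $x$ has $x^\top Qx\ge K$. The obstacle is that the improvement problem hands us a feasible incumbent for free, so the construction must make that incumbent cheap to provide and must encode the threshold $K$ through it.

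The plan is to add one fresh binary variable $y$ as a ``dummy incumbent''.
\begin{itemize}[leftmargin=*]
\item Weight: give $y$ weight $w_y=c+1$. Keep capacity $c$ for the original items and set the new capacity to $c'=c+1$. With positive integer weights, $y=1$ then forces every original item to $0$ (since $c+1+w_i>c+1$). When $y=0$, an original assignment is feasible with capacity $c+1$ rather than $c$, so I instead set $w_y=c$ and capacity $c'=c$. Now $y=1$ forces all other items out because every $w_i\ge1$, and the $y=0$ solutions are exactly the original feasible set. This is where strictly positive weights are used.
\item Profit: give $y$ profit $Q_{yy}=K-1$ and zero interaction terms. (If $K-1$ were negative, shift $K$ first; $K\ge1$ can be assumed without loss of generality since the empty set achieves $0$.)
\item Incumbent: take $x^{\inc}=e_y$, which is feasible and has value $K-1$.
\end{itemize}

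Correctness then reduces to two short checks. A feasible point with value $>K-1$ either has $y=1$, which forces the point to equal $e_y$ with value exactly $K-1$, a contradiction; or has $y=0$, so it is a feasible original solution with integer value $\ge K$. The converse is immediate.

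The instance keeps integer objective coefficients and strictly positive integer weights, and all numbers have polynomial size. Because QKP is strongly NP-hard, hardness persists even with small weights. Together with membership in NP, this gives NP-completeness.
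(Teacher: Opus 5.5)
Your proposal is correct and is essentially the paper's proof. The paper also appends one item of weight $C$ and diagonal profit $K-1$, leaves the capacity at $C$, and takes $x^{\inc}=(0,\ldots,0,1)$; it then uses strict positivity of the weights together with integrality of the objective to get $x^\top Qx>K-1\iff x^\top Qx\ge K$. In the write-up, delete the abandoned $w_y=c+1$, $c'=c+1$ attempt from your first bullet (as you noticed, it enlarges the $y=0$ feasible set), and state $c\ge1$, as the paper does, so that the new weight is itself strictly positive.
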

The proof, in Appendix \ref{incumbent-hardness}, reduces QKP decision to incumbent improvement by adding a capacity-saturating item of value $K-1$. Thus \ADIP{} addresses an NP-hard residual search problem even after a feasible point is provided.

\subsection{Expressivity of the autoregressive parametrization}
Universal approximation results for transformers \citep{yun2020transformers} imply that the neural change of variables does not discard binary solutions.
\begin{proposition}[Combinatorial expressivity]\label{prop:sequence-surjectivity}
Fix sequence length $n$ and a transformer class satisfying the universal approximation property of \citet{yun2020transformers}. For every $x^\star\in\{0,1\}^n$ and $\varepsilon\in(0,1/2)$, there exist parameters $\theta$ such that $\|\tilde{x}_\theta-x^\star\|_\infty<\varepsilon$. Hence threshold decoding at $1/2$ is surjective onto $\{0,1\}^n$.
\end{proposition}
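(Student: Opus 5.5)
The strategy is to reduce the statement to an approximation question about a single sequence-to-sequence function and then apply the universal approximation theorem of \citet{yun2020transformers}. The output $\tilde{x}_\theta$ is produced autoregressively: the $i$-th relaxed bit is computed from the relaxed prefix $(\tilde{x}_\theta)_{<i}$, embedded via Eq.~\ref{eq:relaxed_embedding}. We work with the deterministic relaxation, in which the Gumbel noise is set to zero, or is treated as fixed, at temperature $\tau$. The quantity to control is therefore $\tilde{x}_i=\sigma\bigl((\ell_{i,1}-\ell_{i,0})/\tau\bigr)$, where $\sigma$ is the logistic sigmoid. It suffices to make the logit gap $\Delta_i=\ell_{i,1}-\ell_{i,0}$ satisfy $\operatorname{sign}(\Delta_i)=2x^\star_i-1$ and $|\Delta_i|>\tau\log\frac{1-\varepsilon}{\varepsilon}$. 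Indeed, $\sigma(t)>1-\varepsilon$ exactly when $t>\log\frac{1-\varepsilon}{\varepsilon}$, and symmetrically for $x^\star_i=0$.

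The first step is to construct a target function. Set $M=2\tau\log\frac{1-\varepsilon}{\varepsilon}+1$. Define the causal sequence-to-sequence map $f$ on the compact set of input embeddings obtained from prefixes in $[0,1]^{i-1}$, together with positional encodings and a start token. Its $i$-th output is the logit pair $\bigl(0,\,M(2x^\star_i-1)\bigr)$. This $f$ ignores its input entirely and depends only on position. It is therefore constant on each position, hence continuous, and it is trivially causal.

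The second step is to approximate $f$. By \citet{yun2020transformers}, for any $\delta>0$ there are parameters $\theta$ such that the transformer $T_\theta$ approximates $f$ within $\delta$ in $L^p$ on the compact domain. Because $f$ is position-wise constant, we want a uniform bound, which follows from a sup-norm version of the result. Yun et al.\ prove $L^p$ approximation. To obtain a uniform bound we can either invoke their construction, which is piecewise constant and can be made exact on a grid, or note the following: in our autoregressive rollout, the inputs actually encountered form a single deterministic trajectory, namely the prefix of the generated sequence itself. This trajectory is independent of the approximation error once the error is small, because only signs matter to leading order. Hence we only need accuracy at $n$ specific points, which can be enforced directly.

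\textbf{Main obstacle.} The crux is the gap between the $L^p$ approximation guarantee and the pointwise behavior we need, together with causality. The autoregressive self-feeding means that errors in early outputs change later inputs. I would handle this by choosing the approximation accuracy $\delta<1/2$ and arguing by induction on $i$, as follows. The realized prefix lies in a small neighborhood of the ideal prefix, and there the approximating network's logit gap stays within $1/2$ of $\pm M$. This yields a margin above the threshold $\tau\log\frac{1-\varepsilon}{\varepsilon}$ and gives $|\tilde{x}_i-x^\star_i|<\varepsilon$. If only $L^p$ control is available, I would apply it to the positionwise-constant target. Yun et al.'s quantization construction, in which the feed-forward layers map each grid cell to a constant, then yields a uniform error on the cells containing the trajectory, and this suffices. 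Causality is obtained by using the masked-attention variant, or by noting that $f$ is input-independent, so no future information is needed.

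Finally, surjectivity of threshold decoding follows immediately. Given any $x^\star$, choose $\varepsilon<1/2$ and the corresponding $\theta$. Then $\tilde{x}_i>1/2$ if and only if $x^\star_i=1$, so $\mathbf 1\{\tilde{x}_\theta\ge 1/2\}=x^\star$.
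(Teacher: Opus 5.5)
Your skeleton is the paper's. You approximate an input-independent target using \citet{yun2020transformers} and then threshold at $1/2$. Targeting the logits with an explicit margin $M$, as you do, is more faithful to Eq.~\ref{eq:gumbel_softmax} than the paper's direct approximation of $x^\star$. The paper gets the crucial uniform bound simply by writing the approximation as $\sup_{z\in\mathcal K}\|T_\theta(z)-f_{x^\star}(z)\|_\infty<\varepsilon$.

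You are right that Yun et al.\ only prove $L^p$ approximation. However, neither of your patches turns that into pointwise control, so as written the key step is unproved.
\begin{itemize}
\item An $L^p$ bound says nothing on null sets, and the rollout prefixes are not fixed points you can target. Since $\tilde{x}_i=\sigma(\Delta_i/\tau)$ varies continuously with $\theta$, the trajectory is not independent of the approximation error; it is not determined by signs alone. So ``enforcing accuracy at $n$ specific points'' is circular.
\item Appealing to the quantization construction does not help either. It is exact only for Yun et al.'s idealized network with hardmax attention and discontinuous piecewise-linear activations. Their final passage to softmax and ReLU is precisely where the error is pushed onto a set of small measure rather than removed, and nothing prevents your prefixes from lying in that set.
\end{itemize}
Your induction along the rollout is also unnecessary. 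Since $f$ ignores its input and your domain already contains every relaxed prefix, any uniform bound on that domain controls every step at once. The only missing ingredient is a uniform bound itself.

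Because the target is input-independent, the cleanest repair is to skip approximation theory and exhibit $\theta$ directly. For instance, zero all attention-value and feed-forward weights and take $e_0=e_1=0$. Then set the learned positional embedding at the position whose output gives $\ell_i$ to $M(2x^\star_i-1)u$, where $u$ is the readout direction of the logit gap. Scale the logit head if a final normalization layer is present. This gives $\Delta_i=\pm M$ exactly for every prefix. Alternatively, cite a genuinely sup-norm universal approximation theorem, which is what the paper's proof tacitly relies on.

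Finally, your margin covers only the noiseless relaxation. For a fixed noise realization, $M$ must also absorb $\max_i|g_{i,1}-g_{i,0}|$. For random Gumbel noise, whose difference is unbounded, no finite $\theta$ gives the bound almost surely. The statement must therefore be read with the noise set to zero.
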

\begin{corollary}[Hardness under autoregressive reparametrization]\label{cor:theta-hardness}
Under Proposition~\ref{prop:sequence-surjectivity} and given $M \in \R$, deciding whether there exists $\theta$ such that $\hat{x}_\theta^\top Q \hat{x}_\theta > M$ subject to $A\hat{x}_\theta\le b$ \footnote{This is the decision version of globally optimizing the objective via the hard-decoded transformer.} is NP-hard.
\end{corollary}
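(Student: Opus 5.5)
The plan is a polynomial-time reduction from the decision version of binary quadratic programming (equivalently QKP), which is NP-hard (Theorem~\ref{thm:incumbent-hardness} already gives hardness of a closely related decision problem). The key claim is that the $\theta$-problem and the original problem have exactly the same yes-instances. Define the set of hard-decodable outputs $\mathcal H=\{\hat{x}_\theta:\theta \text{ ranges over the transformer class}\}\subseteq\{0,1\}^n$. Because $\hat{x}_\theta$ is a thresholding of $\tilde{x}_\theta$, the inclusion $\mathcal H\subseteq\{0,1\}^n$ is immediate.

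For the reverse inclusion I will invoke Proposition~\ref{prop:sequence-surjectivity}. Fix any $x^\star\in\{0,1\}^n$ and some $\varepsilon<1/2$. The proposition supplies $\theta$ with $\|\tilde{x}_\theta-x^\star\|_\infty<\varepsilon$. Each coordinate $\tilde{x}_{\theta,i}$ then lies on the correct side of $1/2$, so the threshold decoding gives $\hat{x}_\theta=x^\star$. This shows $\mathcal H=\{0,1\}^n$. Consequently, for every $(Q,A,b,M)$, some $\theta$ with $\hat{x}_\theta^\top Q\hat{x}_\theta>M$ and $A\hat{x}_\theta\le b$ exists if and only if some $x\in\mathcal F$ satisfies $x^\top Qx>M$.

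The reduction is therefore the identity map on instances $(Q,A,b,M)$, which is trivially polynomial. Hardness of the source problem follows from NP-hardness of the QKP decision problem, which is classical. Alternatively, Theorem~\ref{thm:incumbent-hardness} can be used directly by taking $M=(x^{\inc})^\top Qx^{\inc}$, since incumbent improvement is a special case of the decision problem above.

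The main obstacle is a subtle point about what is held fixed. The transformer class, in particular its width and depth, must not depend on the instance in a way that breaks polynomiality. Also, the decision-problem input is $(Q,A,b,M)$, and $\theta$ only appears under the existential quantifier. Proposition~\ref{prop:sequence-surjectivity} quantifies over a fixed class for length $n$, so I will state the corollary for that class. Because the equivalence holds as a mathematical fact, the reduction never needs to construct $\theta$ explicitly. I also need to check the straight-through decoding convention: the proposition covers the deterministic map $\tilde{x}_\theta$, and the corollary is read with the noise fixed to zero, or with $\hat{x}_\theta$ taken as the thresholded deterministic output. Since I only claim NP-hardness and not membership in NP, the unbounded size of $\theta$ causes no issue.
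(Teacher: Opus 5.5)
Your proposal is correct and matches the paper's argument: Proposition~\ref{prop:sequence-surjectivity} gives $\{\hat{x}_\theta:\theta\in\Theta\}=\{0,1\}^n$, so the $\theta$-problem and the original binary quadratic program have the same yes-instances, and NP-hardness transfers through the identity reduction. Your extra remarks are sound refinements that the paper leaves implicit: the transformer class is fixed for length $n$, decoding is deterministic, and Theorem~\ref{thm:incumbent-hardness} can alternatively be used with $M=(x^{\inc})^\top Qx^{\inc}$.
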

Proofs are in Appendices \ref{sequence-surjectivity}-\ref{theta-hardness}. The parametrization therefore retains the combinatorial search space and its worst-case hardness. This separates expressivity from optimization: the role of the transformer reparameterization is to provide a different continuous geometry through which the discrete search space is explored (see Paragraph \ref{sec:induced_geometry}).

\subsection{Phase 1: feasible incumbent imitation}
Let $x^{\inc}$ be the best feasible solver incumbent. For each position $i$, the transformer receives $x^{\inc}_{<i}$ and predicts $x_i^{\inc}$. Exact targets in $\{0,1\}$ can drive the logits toward saturation, so we use
\begin{gather}
y_i^{\inc}
=
\begin{cases}
1-\varepsilon, & x_i^{\inc}=1,\\
\varepsilon,   & x_i^{\inc}=0,
\end{cases}
\\[0.5em]
\mathcal{L}_{\mathrm{init}}(\theta)
=
-\frac{1}{n}
\sum_{i=1}^{n}
\left[
y_i^{\inc}
\log p_\theta(x_i=1\mid x^{\inc}_{<i})
+
(1-y_i^{\inc})
\log\!\left(
1-p_\theta(x_i=1\mid x^{\inc}_{<i})
\right)
\right],
\label{eq:init_loss}
\end{gather}
with $\varepsilon=0.001$ in our experiments. Here target smoothing enforces a finite logit gap $\log((1-\varepsilon)/\varepsilon)$, whereas exact targets are reached only in the infinite-logit limit. Thus Phase~1 yields a model whose hard decoding reproduces the feasible incumbent while retaining usable gradients for Phase~2.
 The decoded hard sequence agrees with $x^{\inc}$ whenever the predicted probabilities are on the correct side of $1/2$. At the same time, the relaxed output remains a finite-logit representation near the incumbent, so Phase~2 begins from a feasible discrete anchor without having destroyed the gradient signal needed to move away from it.

\subsection{Phase 2: differentiable optimization in transformer weight space}
Let $\tilde{x}_\theta\in[0,1]^n$ denote the relaxed autoregressive output and define
\begin{equation}
\Phi(\tilde{x})=\sum_{j=1}^m\relu((A\tilde{x}-b)_j),\qquad
\mathcal J_\lambda(\theta)=\tilde{x}_\theta^\top Q\tilde{x}_\theta-\lambda\Phi(\tilde{x}_\theta).
\label{eq:phase2_objective}
\end{equation}
We maximize $\mathcal J_\lambda$ with respect to $\theta$. During Phase~2 we monitor hard decoded feasibility $A\hat{x}_\theta\le b$; the returned solution is always the best feasible hard candidate encountered.

For integer-valued constraints, the incumbent admits a conservative finite penalty scale for binary solutions.
\begin{proposition}[Sufficient penalty for binary feasibility]\label{prop:lambda-bound}
Let $f_{\inc}=(x^{\inc})^\top Qx^{\inc}$ and $U_Q=\sum_{i,j}\max\{Q_{ij},0\}$. If $A,b$ are integer-valued and $\lambda>U_Q-f_{\inc}$, every infeasible binary $x$ satisfies $x^\top Qx-\lambda\Phi(x)<f_{\inc}$; hence no global maximizer of the penalized binary objective is infeasible.
\end{proposition}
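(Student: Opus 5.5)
The argument rests on two elementary bounds: an upper bound on the raw objective over binary vectors, and a lower bound of $1$ on the penalty at any infeasible binary point. Integrality of $A$ and $b$ is what supplies the second bound. Combining them with the hypothesis on $\lambda$ gives the strict inequality directly.

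\textbf{Step 1 (objective bound).} For any $x\in\{0,1\}^n$,
\[
x^\top Qx=\sum_{i,j}Q_{ij}x_ix_j\le\sum_{i,j}\max\{Q_{ij},0\}=U_Q,
\]
since every product $x_ix_j$ lies in $\{0,1\}$. Negative entries of $Q$ can only lower the sum, and positive entries contribute at most $Q_{ij}$.

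\textbf{Step 2 (penalty floor).} Let $x$ be binary and infeasible. Because $A$ and $b$ are integer-valued, $Ax-b$ is an integer vector. Infeasibility means some coordinate $(Ax-b)_j>0$, and an integer that is strictly positive is at least $1$. Every $\relu$ term is nonnegative, so
\[
\Phi(x)=\sum_{k=1}^m\relu((Ax-b)_k)\ge \relu((Ax-b)_j)\ge 1.
\]
This is the only step that genuinely uses integrality; without it the violation could be arbitrarily small and no finite $\lambda$ would suffice.

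\textbf{Step 3 (combine).} We distinguish two cases for the sign of $\lambda$.
\begin{itemize}
\item If $\lambda\ge 0$, Steps 1 and 2 give $x^\top Qx-\lambda\Phi(x)\le U_Q-\lambda$. The hypothesis $\lambda>U_Q-f_{\inc}$ rewrites as $U_Q-\lambda<f_{\inc}$, which is the claim.
\item The case $\lambda<0$ cannot occur. The incumbent $x^{\inc}$ is binary, so Step 1 gives $f_{\inc}\le U_Q$. Hence $\lambda>U_Q-f_{\inc}\ge 0$.
\end{itemize}

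\textbf{Step 4 (consequence for maximizers).} The incumbent is feasible, so $\Phi(x^{\inc})=0$ and its penalized value is exactly $f_{\inc}$. By Step 3, every infeasible binary $x$ has penalized value strictly below $f_{\inc}$. Such an $x$ is therefore beaten by $x^{\inc}$ and cannot be a global maximizer of the penalized binary objective.
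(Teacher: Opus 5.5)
Your proof is correct and follows the paper's argument essentially step for step: integrality gives $\Phi(x)\ge 1$ at every infeasible binary point, $x^\top Qx\le U_Q$ on the hypercube, and these combine with $\lambda>U_Q-f_{\inc}$ and $\Phi(x^{\inc})=0$ to give the result. You also note that $f_{\inc}\le U_Q$ forces $\lambda>0$, which is what justifies $\lambda\Phi(x)\ge\lambda$; the paper uses this without stating it, so your version is slightly more careful.
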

 The proof is in Appendix \ref{lambda-bound}. The guarantee does not extend to arbitrary relaxed points, whose violations can be arbitrarily small. Proposition~\ref{prop:lambda-bound} cannot be interpreted as proving that the complete relaxed trajectory remains feasible, however, in practice, we will see in the subsequent paragraph \ref{gumbel-soft-relax} that Gumbel-softmax activations often yield near-discrete bits, and this proposition thus explains why near-discrete solutions encountered on the trajectory remain feasible thanks to the Lagrangian penalty.

\subsection{Autoregressive relaxed rollout}
Phase~2 differentiates through the complete generated sequence. Using Eq.~\ref{eq:relaxed_embedding}, at step $i$ the model computes $\ell_i=T_\theta(e(\tilde{x}_1),\ldots,e(\tilde{x}_{i-1}))$ and $\tilde{x}_i=\pi_{i,1}$ from Eq.~\ref{eq:gumbel_softmax}. This recursion defines a differentiable map $\theta\mapsto\tilde{x}_\theta$, so the relaxed objective can be backpropagated through the entire autoregressive computational graph.
This dependence across positions is essential: changing the parameters that affect an early relaxed token also changes the prefix consumed at all subsequent generation steps. The resulting Jacobian therefore couples coordinates through the shared autoregressive computation and does not treat the $n$ relaxed variables as independent optimization parameters.

\subsection{Near-discrete exploration with Gumbel-softmax}\label{gumbel-soft-relax}
Standard softmax optimization can traverse highly fractional vectors with no combinatorial meaning: a fractional point may have high relaxed objective value while corresponding to no good feasible binary solution. Gumbel--softmax biases Phase~2 toward near-binary configurations. This has two practical implications. First, objective values encountered during training are evaluated near the combinatorial vertices. Second, the trajectory can make relatively sharp moves between near-discrete configurations rather than slowly traversing the interior of $[0,1]^n$. In the binary case, with logit difference $\Delta=\ell_1-\ell_0$ and Gumbel noise difference $G=g_1-g_0$, we have
$\tilde{x}_\tau=\sigma((\Delta+G)/\tau)$ where $\sigma$ is the logistic function.
\begin{proposition}[Low-temperature Gumbel--softmax]\label{prop:gumbel-gradient}
Let $D_\tau=\partial\tilde{x}_\tau/\partial\Delta$ and let $q$ be the standard logistic density. For finite $\Delta$, as $\tau\to0$,
\begin{equation}
\tilde{x}_\tau\to\mathbf 1\{\Delta+G>0\}\ \text{almost surely},\qquad
\E[D_\tau]\to q(-\Delta),\qquad
\Var(D_\tau)=\Theta(\tau^{-1}),
\label{eq:gumbel_moments}
\end{equation}
with $\E[D_\tau^2]=q(-\Delta)/(6\tau)+o(\tau^{-1})$.
\end{proposition}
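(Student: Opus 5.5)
Everything reduces to one scalar random variable. Since $g_1,g_0$ are i.i.d.\ standard Gumbel, their difference $G=g_1-g_0$ has the standard logistic distribution, with density $q(z)=\sigma(z)(1-\sigma(z))$. This density is symmetric, so $q(-\Delta)=q(\Delta)$.

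Write $Z=\Delta+G$. Then $\tilde{x}_\tau=\sigma(Z/\tau)$ and
\[
D_\tau=\frac{1}{\tau}\,\sigma'(Z/\tau)=\frac{1}{\tau}\,q(Z/\tau).
\]

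\textbf{Almost-sure limit.} $Z$ has a continuous law, so $P(Z=0)=0$. On the event $\{Z\neq 0\}$ we have $\sigma(Z/\tau)\to\mathbf 1\{Z>0\}$ pointwise as $\tau\to0$. This gives the first claim.

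\textbf{Mean.} The density of $Z$ is $f(z)=q(z-\Delta)$. I would compute
\[
\E[D_\tau]=\int \frac{1}{\tau}\,q(z/\tau)\,f(z)\,dz .
\]
The kernel $\tau^{-1}q(\cdot/\tau)$ is an approximate identity: it is a probability density concentrating at $0$. Since $f$ is bounded and continuous, substituting $u=z/\tau$ gives
\[
\E[D_\tau]=\int q(u)\,f(\tau u)\,du .
\]
Dominated convergence (dominating function $\|f\|_\infty\,q(u)$) then yields
\[
\E[D_\tau]\to f(0)=q(-\Delta).
\]

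\textbf{Second moment.} By the same substitution,
\[
\E[D_\tau^2]=\int \tau^{-2}q(z/\tau)^2 f(z)\,dz=\frac{1}{\tau}\int q(u)^2 f(\tau u)\,du .
\]
Dominated convergence again gives $\int q(u)^2 f(\tau u)\,du\to f(0)\int q^2$. To evaluate $\int q^2$, I would change variables to $s=\sigma(u)$, using $ds=q(u)\,du$ and $q=s(1-s)$:
\[
\int q(u)^2\,du=\int_0^1 s(1-s)\,ds=\tfrac16 .
\]
Hence $\E[D_\tau^2]=q(-\Delta)/(6\tau)+o(\tau^{-1})$.

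\textbf{Variance.} Combining the two moments,
\[
\Var(D_\tau)=\E[D_\tau^2]-\E[D_\tau]^2=\frac{q(-\Delta)}{6\tau}+o(\tau^{-1})-O(1).
\]
Because $\Delta$ is finite, $q(-\Delta)>0$, so the leading term is strictly positive and of order $\tau^{-1}$. This gives $\Var(D_\tau)=\Theta(\tau^{-1})$.

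\textbf{Main obstacle.} There is no deep step. The one point needing care is the $o(\tau^{-1})$ remainder in the second moment. My plan handles it by dominated convergence on the rescaled integral: the integrand $q(u)^2 f(\tau u)$ is dominated by $\|f\|_\infty\, q(u)^2$, which is integrable. This is cleaner than a Taylor expansion, which would require smoothness of $f$ beyond continuity. It is also worth checking in the write-up that the scale conventions match the paper's: the logistic density of $G$ is exactly $\sigma'$ because both Gumbels have unit scale, and this is where the constant $1/6$ comes from.
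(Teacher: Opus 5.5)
Your proposal is correct and follows essentially the same route as the paper's proof: the rescaling $z=\tau u$, dominated convergence on $\int q(u)f(\tau u)\,du$ and on $\tau^{-1}\int q(u)^2 f(\tau u)\,du$, the evaluation $\int q^2=\int_0^1 s(1-s)\,ds=1/6$, and then combining the two moments. Your extra remarks make explicit steps the paper leaves implicit: that $\sigma'$ coincides with the logistic density $q$, the dominating function $\|f\|_\infty\, q$, that $P(Z=0)=0$, and that $q(-\Delta)>0$.
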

Thus decreasing $\tau$ confines the forward transition to an $O(\tau)$-wide region while preserving a finite expected gradient through increasingly rare $O(\tau)$ switching events carrying $O(\tau^{-1})$ gradients; the diverging variance motivates keeping $\tau$ bounded away from zero. In our autoregressive model, switching an early coordinate also changes the conditioning prefix of all later decisions, thus flipping multiple bits in one gradient step. The proof is in Appendix \ref{gumbel-gradient}.

\subsection{Why optimize in transformer space?}\label{sec:induced_geometry}
The central motivation for \ADIP{} is the nonlinear change of variables $\theta\mapsto\tilde{x}_\theta\mapsto F_\lambda(\tilde{x}_\theta)$, where $F_\lambda(\tilde{x})=\tilde{x}^\top Q\tilde{x}-\lambda\Phi(\tilde{x})$. Let $J_\theta=\partial\tilde{x}_\theta/\partial\theta$.
\begin{theorem}[Induced decision-space dynamics]\label{thm:induced-flow}
For continuous-time gradient ascent $\dot\theta=\nabla_\theta F_\lambda(\tilde{x}_\theta)$,
\begin{equation}
\dot{\tilde{x}}=K_\theta\nabla_{\tilde{x}}F_\lambda(\tilde{x}),\qquad K_\theta:=J_\theta J_\theta^\top\succeq0.
\label{eq:induced_kernel_flow}
\end{equation}
\end{theorem}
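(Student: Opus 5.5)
The identity is a chain-rule computation, so the plan is to differentiate the composite map twice and combine the results.

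\textbf{Step 1 (regularity).} Fix the Gumbel noise realization, so that $\theta\mapsto\tilde{x}_\theta$ is deterministic. This map is a composition of smooth operations: the relaxed embeddings of Eq.~\ref{eq:relaxed_embedding}, the transformer layers, and the tempered softmax of Eq.~\ref{eq:gumbel_softmax}. It is therefore differentiable, with Jacobian $J_\theta\in\R^{n\times p}$, where $p$ is the number of parameters.

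\textbf{Step 2 (gradient in parameter space).} By the chain rule,
\begin{equation*}
\nabla_\theta F_\lambda(\tilde{x}_\theta)=J_\theta^\top\nabla_{\tilde{x}}F_\lambda(\tilde{x}_\theta).
\end{equation*}
The term $\Phi$ contains $\relu$, which is not differentiable where some constraint is exactly tight. I would handle this by working on the open set of $\tilde{x}$ where no $(A\tilde{x}-b)_j$ vanishes. Alternatively, one can interpret $\nabla_{\tilde{x}}F_\lambda$ as the a.e. gradient, or as the selected subgradient used in backpropagation (e.g.\ $\relu'(0)=0$), and apply the chain rule with that selection. The quadratic part has the explicit gradient $(Q+Q^\top)\tilde{x}$ and causes no difficulty. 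This nonsmooth point is the only real obstacle, and I would state the theorem as holding wherever $F_\lambda$ is differentiable along the trajectory, which is the generic case.

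\textbf{Step 3 (induced dynamics in decision space).} Differentiate $\tilde{x}_{\theta(t)}$ in time along a solution of the flow:
\begin{equation*}
\dot{\tilde{x}}=J_\theta\dot\theta=J_\theta J_\theta^\top\nabla_{\tilde{x}}F_\lambda(\tilde{x}).
\end{equation*}
This is exactly Eq.~\ref{eq:induced_kernel_flow} with $K_\theta=J_\theta J_\theta^\top$.

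\textbf{Step 4 (positive semidefiniteness).} For any $v\in\R^n$,
\begin{equation*}
v^\top K_\theta v=\|J_\theta^\top v\|_2^2\ge0,
\end{equation*}
so $K_\theta\succeq0$. As a corollary, $F_\lambda$ is nondecreasing along the flow, since
\begin{equation*}
\tfrac{d}{dt}F_\lambda=\nabla F_\lambda^\top K_\theta\nabla F_\lambda\ge0.
\end{equation*}
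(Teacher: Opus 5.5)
Your proposal is correct and follows the paper's proof exactly: the chain rule gives $\nabla_\theta F_\lambda=J_\theta^\top\nabla_{\tilde{x}}F_\lambda$, composing with $\dot{\tilde{x}}=J_\theta\dot\theta$ yields $K_\theta=J_\theta J_\theta^\top$, and $v^\top K_\theta v=\|J_\theta^\top v\|_2^2\ge0$ gives positive semidefiniteness. Your regularity caveats are sensible additions that the paper leaves implicit: fixing the Gumbel noise so the map is deterministic, and restricting to points where the ReLU penalty (and any ReLU activations inside the transformer) are differentiable.
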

\begin{proof}
By the chain rule, $\nabla_\theta F_\lambda=J_\theta^\top\nabla_{\tilde{x}}F_\lambda$ and $\dot{\tilde{x}}=J_\theta\dot\theta$, giving Eq.~\ref{eq:induced_kernel_flow}. Moreover $v^\top K_\theta v=\|J_\theta^\top v\|_2^2\ge0$.
\end{proof}
Direct relaxed gradient ascent follows $\dot{\tilde{x}}_{\mathrm{direct}}=\nabla_{\tilde{x}}F_\lambda(\tilde{x})$, corresponding to $K=I$, whereas \ADIP{} can rotate, suppress, or amplify directions through the state-dependent kernel $K_\theta$, analogous to parameter-space kernel dynamics in neural networks \citep{jacot2018neural}.
 Theorem~\ref{thm:induced-flow} therefore makes precise why the transformer is more than a high-dimensional encoding of $\tilde{x}$. Direct optimization follows the Euclidean gradient of $F_\lambda$, whereas \ADIP{} follows that gradient after multiplication by a positive-semidefinite matrix determined by the current network Jacobian. Because $J_\theta$ changes during training, the induced kernel is itself state-dependent.

The induced geometry also depends on the specific parameters, not only on the represented solution. For one attention head with input representation $H$, let $Q=HW_Q$ (queries), $K=HW_K$ (keys), and $V=HW_V$ (values). Causally masked attention is $\operatorname{softmax}(QK^\top/\sqrt{d_k}+M)V$, so the forward map depends on $W_Q,W_K$ through their product while gradient descent acts on the matrices separately.
\begin{theorem}[(informal) Path-dependent induced geometry]\label{thm:path-dependent-informal}
Assume an attention head affects the relaxed rollout nontrivially. For $c>0$, replace $W_Q$ by $cW_Q$ and $W_K$ by $c^{-1}W_K$, leaving all other parameters fixed. The resulting transformer defines exactly the same autoregressive map for every prefix, yet for some $c\ne1$ its kernel $J_\theta J_\theta^\top$ differs. Moreover, there exists a quadratic objective for which the two functionally identical transformers induce different instantaneous velocities in relaxed decision space.
\end{theorem}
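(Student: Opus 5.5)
The proof has three parts. First, the rescaled transformer is functionally identical. Second, the Jacobian transforms in a way that changes the kernel. Third, a quadratic objective makes that change visible in the velocity.

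For functional identity, write $\theta=(W_Q,W_K,\omega)$, where $\omega$ collects all other parameters, and set $\theta_c=(cW_Q,c^{-1}W_K,\omega)$. For every input representation $H$ we have $(HcW_Q)(Hc^{-1}W_K)^\top=HW_QW_K^\top H^\top$. Hence the attention scores, the masked softmax, the head output and every subsequent layer agree for every prefix, including relaxed prefixes built from Eq.~\ref{eq:relaxed_embedding}. By induction over the rollout index $i$, the logits $\ell_i$, the relaxed bits $\tilde{x}_i$ (for fixed Gumbel noise) and the conditionals $p_\theta(x_i\mid x_{<i})$ coincide, so $\tilde{x}_{\theta_c}=\tilde{x}_\theta$.

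For the Jacobian, let $\tilde{x}_\theta=G(W_QW_K^\top,\omega)$ and $S=W_QW_K^\top$. By the chain rule, $\partial\tilde{x}/\partial W_Q=\mathcal D_S G[\,\cdot\,W_K^\top]$ and $\partial\tilde{x}/\partial W_K=\mathcal D_S G[W_Q(\cdot)^\top]$, where $\mathcal D_S G$ is evaluated at the common $S$. At $\theta_c$ the first block is multiplied by $c^{-1}$ and the second by $c$, while the $\omega$ block is unchanged. Therefore
\[
K_{\theta_c}=c^{-2}A+c^{2}B+C,\qquad A=J_{W_Q}J_{W_Q}^\top,\; B=J_{W_K}J_{W_K}^\top,\; C=J_\omega J_\omega^\top,
\]
with $A,B\succeq0$ evaluated at $c=1$. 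The hypothesis that the head affects the rollout nontrivially is interpreted as $\mathcal D_SG\neq0$ at the point. This makes $J_{W_Q}\neq0$ whenever $W_K\neq0$, so $A\neq0$ and $\operatorname{tr}A>0$; the degenerate case $W_K=0$ is treated separately. The trace $c^{-2}\operatorname{tr}A+c^2\operatorname{tr}B+\operatorname{tr}C$ is a nonconstant function of $c>0$: if $\operatorname{tr}B>0$ it is strictly convex in $c^2$ with a unique minimizer, and if $\operatorname{tr}B=0$ it is strictly decreasing. In either case it takes its $c=1$ value at most at one other $c$, so $K_{\theta_c}\neq K_\theta$ for all but at most one $c\neq1$.

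For the velocity, Theorem~\ref{thm:induced-flow} gives $\dot{\tilde{x}}=K_\theta\nabla_{\tilde{x}}F_\lambda$. We need $Q$ such that $(K_{\theta_c}-K_\theta)g\neq0$, where $g=\nabla F_\lambda(\tilde{x})$ at the common point. The difference $D=(c^{-2}-1)A+(c^2-1)B$ is symmetric and nonzero, so pick $v$ with $Dv\neq0$. The simplest choice is $\lambda=0$ (or $\lambda$ arbitrary when $\Phi$ is locally zero or differentiable there), with $Q$ symmetric and satisfying $2Q\tilde{x}=v$. When $\tilde{x}\neq0$, which holds generically since Gumbel–softmax outputs lie in $(0,1)^n$, one such choice is $Q=\frac{1}{2\|\tilde{x}\|^2}(v\tilde{x}^\top+\tilde{x}v^\top)-\frac{(v^\top\tilde{x})}{2\|\tilde{x}\|^4}\tilde{x}\tilde{x}^\top$. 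This yields distinct instantaneous velocities.

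The main obstacle is making ``affects nontrivially'' precise enough to guarantee $A\neq0$ or $B\neq0$. I would state the hypothesis formally as $\partial\tilde{x}/\partial S\neq0$ with $W_Q,W_K$ not both zero. The nondegeneracy of the kernel difference then reduces cleanly to the trace argument above.
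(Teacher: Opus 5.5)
Your overall route is the paper's: invariance of $W_QW_K^\top$ gives functional identity, the $W_Q$ and $W_K$ Jacobian blocks scale as $c^{-1}$ and $c$ so that $K_{\theta_c}=c^{-2}A+c^2B+C$, and the same symmetric rank-two $Q$ realizes an arbitrary gradient. Your trace argument is a valid substitute for the paper's $c\to0$, $c\to\infty$ limits, and it gives the sharper conclusion ``all but at most one $c\neq1$''. But it only works once $A\neq0$ or $B\neq0$ is known.

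The genuine gap is the step that produces $A\neq0$. Write $\mathcal D_SG[M]_i=\langle\Gamma_i,M\rangle$. Then $J_{W_Q}[\delta W_Q]_i=\langle\Gamma_iW_K,\delta W_Q\rangle$, so $J_{W_Q}=0$ if and only if $\Gamma_iW_K=0$ for all $i$. In other words, $J_{W_Q}$ only probes directions $\delta S=\delta W_QW_K^\top$ whose row space lies in $\operatorname{col}(W_K)$.

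Whenever $W_K$ lacks full row rank, $\Gamma_iW_K=0$ is compatible with both $\Gamma_i\neq0$ and $W_K\neq0$. This is always the case for a head with $d_k<d$, e.g.\ $d_k=16$, $d=64$ in the paper's setup. So ``$\mathcal D_SG\neq0$ and $W_K\neq0$ implies $J_{W_Q}\neq0$'' is false.

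Here is a concrete failure. Suppose every hidden state entering the head is orthogonal to $\operatorname{col}(W_Q)$ and $\operatorname{col}(W_K)$. Then all queries and keys vanish, so perturbing $W_Q$ or $W_K$ alone leaves every score unchanged to first order. By induction over the rollout, $J_{W_Q}=J_{W_K}=0$. Yet $h_t^\top\delta S\,h_s\neq0$ for generic $\delta S$, so $\partial\tilde x/\partial S$ need not vanish. In this configuration $K_{\theta_c}$ is constant in $c$ and the theorem's conclusion fails. Your proposed formal hypothesis ($\partial\tilde x/\partial S\neq0$ with $W_Q,W_K$ not both zero) is therefore too weak.

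The fix is to take nondegeneracy to be $J_{W_Q}\neq0$ or $J_{W_K}\neq0$ directly, as the paper's formal Theorem~\ref{thm:path-dependent} does. Then $\operatorname{tr}A=\|J_{W_Q}\|_F^2>0$ or $\operatorname{tr}B=\|J_{W_K}\|_F^2>0$, and your trace argument and velocity construction complete the proof.
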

In Appendix \ref{path-dependent}, we state a formal version of the above theorem and provide a full proof. The key symmetry is that $W_QW_K^\top$ is invariant under this rescaling, while the corresponding Jacobian blocks scale as $J_Q\mapsto c^{-1}J_Q$ and $J_K\mapsto cJ_K$. Thus the weights retain geometric information invisible in $\tilde{x}$ itself. Theorem~\ref{thm:path-dependent-informal} establishes that one can construct rescaled models that are functionally indistinguishable, producing the same conditional distributions at every autoregressive step, yet gradient ascent can evolve differently because it acts on the parameterization rather than on the represented function alone. In particular, the \ADIP{} trajectory cannot in general be described by an autonomous vector field depending only on the current relaxed solution: two parameterizations can represent exactly the same autoregressive map and nevertheless induce different local optimization dynamics. Motivated by this result, we call a trajectory ``tunneling-like'' if it reaches a point whose endpoint under direct relaxed gradient ascent belongs to a different attraction basin from that of the initial incumbent. This operational notion is tested in Section~\ref{sec:tunneling_experiments}.

\section{Experiments}
We evaluate whether \ADIP{} improves solver incumbents under matched wall-clock budgets and whether its weight-space trajectory accesses different basins of the relaxed objective.
The experiments therefore address two complementary questions: whether the method provides practical primal improvement over state-of-the-art open-source solvers given the same total time, and whether the proposed reparametrization changes the local optimization behavior in the relaxed landscape.

\subsection{Quadratic knapsack benchmark and protocol}
Following \citet{lopezpiqueres2025constraining}, we consider
\begin{equation}
\max_{x\in\{0,1\}^n}x^\top Qx\qquad\text{s.t.}\qquad w^\top x\le C,
\label{eq:qkp}
\end{equation}
where $w\in\R_+^n$ are item weights and $C>0$ is the capacity. In our experiments, $Q$ is a random non-symmetric integer-valued matrix, $Q_{ij}\sim\mathrm{Unif}\{-5,\ldots,5\}$, $w_i\sim\mathrm{Unif}\{0,\ldots,5\}$, $C=\frac14\sum_iw_i$, $n\in\{400,1000, 10000\}$, and we generate $100$ instances per experiment for size $n = 400$, and $10$ instances per experiment for sizes $n \in \{1000, 10000\}$. Our baselines are \SCIP{} \citep{achterberg2009scip,hojny2025scip10}, HiGHS \citep{huangfu2018highs}, CP-SAT \citep{perron2023ortools}, and CBC \citep{forrest2005cbc,cbc2026}. Only \SCIP{} can natively run quadratic integer programs, the other solvers require performing Fortet linearization \citep{fortet1960applications} leading to integer linear programs with $O (n^2)$ variables, which may become impractical already at sizes $n = 1000$. Each solver runs for $T_{\mathrm{seed}} = 500$ seconds to provide $x^{\inc}$; \ADIP{} then trains for $T_{\mathrm{train}} = 500$ seconds. The matched baseline runs the solver alone for $T_{\mathrm{total}}=T_{\mathrm{seed}}+T_{\mathrm{train}}$, and we report
\begin{equation}
\mathrm{Imp}_{\mathrm{time}}(\%)=100\frac{f(x^{\ADIP})-f(x^{\mathrm{OSS}}_{T_{\mathrm{total}}})}{|f(x^{\mathrm{OSS}}_{T_{\mathrm{total}}})|}.
\label{eq:time_improvement}
\end{equation}

The default model is a two-layer causal transformer with embedding dimension $64$, four heads, and feed-forward dimension $256$. Phase~1 uses $K_1=150$, $\eta_1=10^{-3}$, and $\varepsilon=0.001$; Phase~2 uses $K_2=2000$, $\eta_2=10^{-3}$ for size $n = 400$ (respectively $\eta_2 = 10^{-4}$ for sizes $n \in \{1000, 10000\}$), $\lambda=100$ for size $n = 400$ (respectively $\lambda = 500$ for sizes $n \in \{1000, 10000\}$) and $\tau=0.5$. Full implementation details are given in Appendix \ref{app:experimental-details}. The code to reproduce the experiments is available at \url{https://github.com/ouns972/adip-paper}.

\subsection{Main results}\label{sec:main-exp}
Table~\ref{tab:main_results} reports the matched-budget comparison. We find that \ADIP{} outperforms \SCIP{} in $58\%$ of the runs across the $n=400$ experiments, $90 \%$ of the runs across the $n=1000$ experiments, and $100\%$ of the runs across the $n= 10 000$ experiments, suggesting that \ADIP{} is stronger when the solvers leave greater room for improvement (at large problem sizes).
\begin{table}[t]
\centering
\caption{Matched-budget results on random QKP instances. Means and standard deviations (between parentheses) are computed across 100 runs per experiment for size $n=400$ and 10 runs per experiment for sizes $n \in \{1000, 10000\}$.}
\label{tab:main_results}
\small
\begin{tabular}{llccc}
\toprule
$n$ & Base solver & Matched solver obj. & \ADIP{} obj. & Improvement (\%)\\
\midrule
400 & \SCIP{} & $2080.51\,(339.71)$ & $2145.45\,(338.41)$ & $5.71\,(4.57)$\\
1000 & \SCIP{} & $5081.30\,(562.45)$ & $5710.90\,(620.19)$ & $12.78\,(10.29)$\\
10000 & \SCIP{} & $0.00\,(0.00)$ & $60.21\,(26.01)$ & $\infty$\\
400 & HiGHS & $226.00\,(240.42)$ & $893.00\,(796.20)$ & $378.48\,(156.70)$\\
1000 & HiGHS & $0.00\,(0.00)$ & $300.00\,(323.85)$ & $\infty$\\
400 & CP-SAT & $726.00\,(234.76)$ & $1277.50\,(355.67)$ & $77.31\,(8.35)$\\
1000 & CP-SAT & $1317.50\,(106.77)$ & $2687.50\,(584.78)$ & $102.85\,(27.95)$\\
400 & CBC & $19.50\,(13.44)$ & $133.50\,(21.92)$ & $848.45\,(765.87)$\\
1000 & CBC & $0.00\,(0.00)$ & $300.00\,(323.85)$ & $\infty$\\
\bottomrule
\end{tabular}
\end{table}

Figure~\ref{fig:training_curve} shows a representative objective-value trajectory. Starting from a \SCIP{} incumbent, the best feasible decoded solution improves during transformer training. The staircase structure reflects improvements of the discrete incumbent rather than the continuous relaxed training objective.
\begin{figure}[t]
\centering
\includegraphics[width=0.68\linewidth]{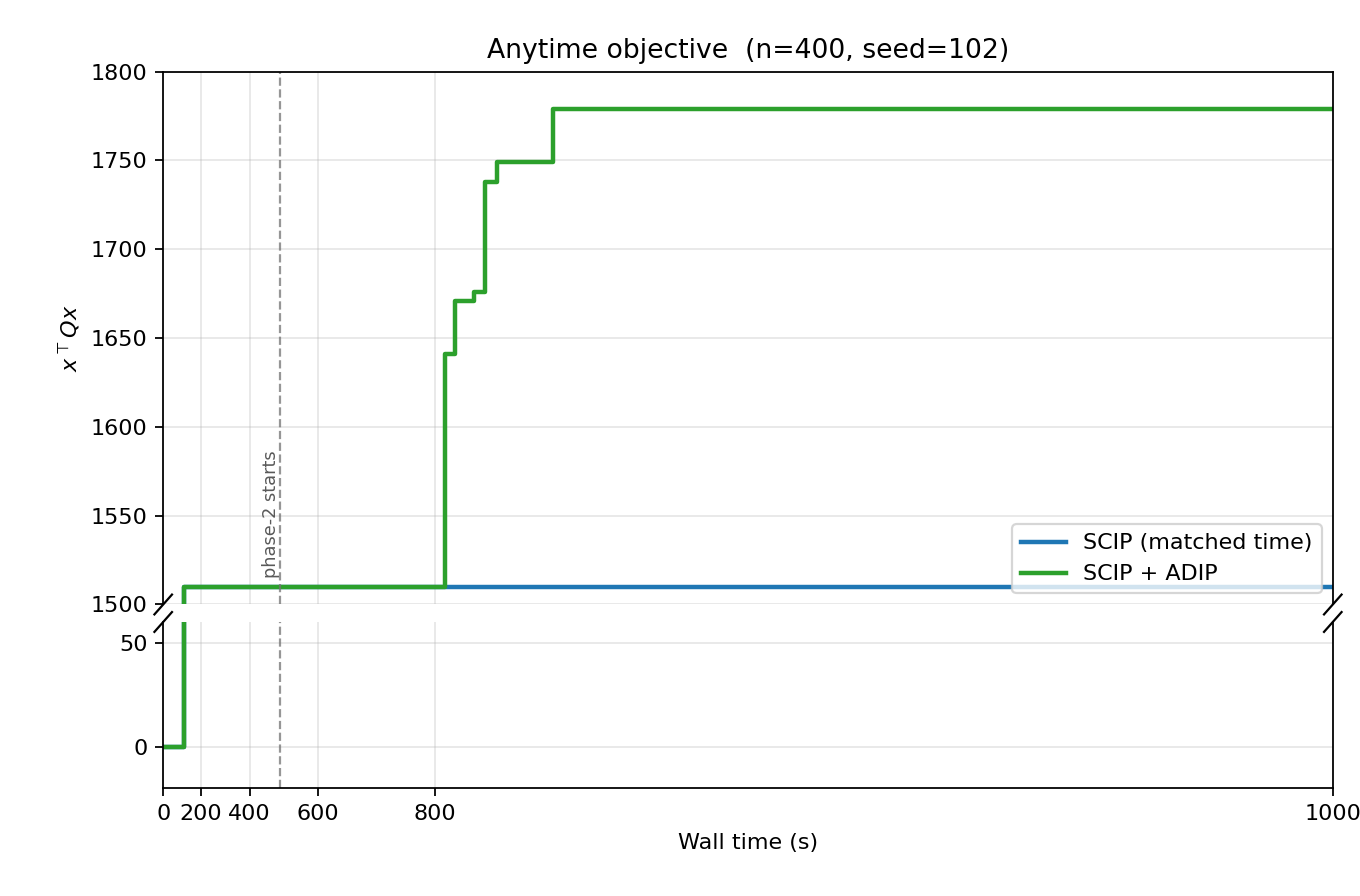}
\caption{Representative objective-value trajectory: initial incumbent search and hard-decoded objective over ADIP training.}
\label{fig:training_curve}
\end{figure}
The improvement also correlates with the \SCIP{} gap at $T_{\mathrm{seed}}$; this secondary diagnostic is shown in Appendix \ref{app:experimental-details}. This observation is consistent with the intuition that the neural improvement procedure is most useful when substantial optimization remains after the seed phase, although the gap alone should not be interpreted as determining \ADIP{} performance.

\subsection{Barrier-crossing and tunneling-like behavior}\label{sec:tunneling_experiments}
We next investigate whether the transformer reparametrization accesses regions of the relaxed landscape that are difficult to reach by combinatorial optimization  or by direct optimization in $\tilde{x}$-space. Let $E(\tilde{x})=-\tilde{x}^\top Q\tilde{x}$ denote the energy associated with the maximization objective. For an initial relaxed point $\tilde{x}^{(0)}$, we run direct gradient descent on $E$ over the continuous feasible relaxation,
\begin{equation}
\tilde{x}^{(t+1)}=\tilde{x}^{(t)}-\eta\nabla_{\tilde{x}}E(\tilde{x}^{(t)}),
\label{eq:basin_descent}
\end{equation}
checking $A\tilde{x}\le b$ at each step. We apply this procedure to two types of initial points: the original solver incumbent $x^{\inc}$, viewed as a point in $[0,1]^n$, and relaxed transformer outputs $\tilde{x}^{(k)}_\theta$ encountered during Phase~2. We iterate until convergence or until the feasible boundary prevents further improvement. If these initial points converge to different endpoints, we can infer that they belong to different attraction basins of the direct relaxed dynamics. The comparison therefore asks whether the transformer trajectory has entered a different region of the non-convex relaxed landscape.

Empirically, \ADIP{} visits points whose relaxed-gradient endpoints differ from that of the initial incumbent, indicating that optimization in transformer parameter space can cross basin boundaries not crossed by direct local optimization from the incumbent. This experiment directly probes the geometric claim of Theorems~\ref{thm:induced-flow}--\ref{thm:path-dependent-informal}: if the transformer merely reproduced ordinary Euclidean optimization in $\tilde{x}$-space, points generated from the incumbent would not be expected to systematically acquire different direct-gradient endpoints. We interpret this as a tunneling-like effect induced by the autoregressive reparametrization. Figure~\ref{fig:tsne_tunneling} visualizes the trajectories; t-SNE is used only for visualization, while basin identity is determined from endpoints in the original high-dimensional relaxed space.
\begin{figure}[t]
\centering
\includegraphics[width=0.72\linewidth]{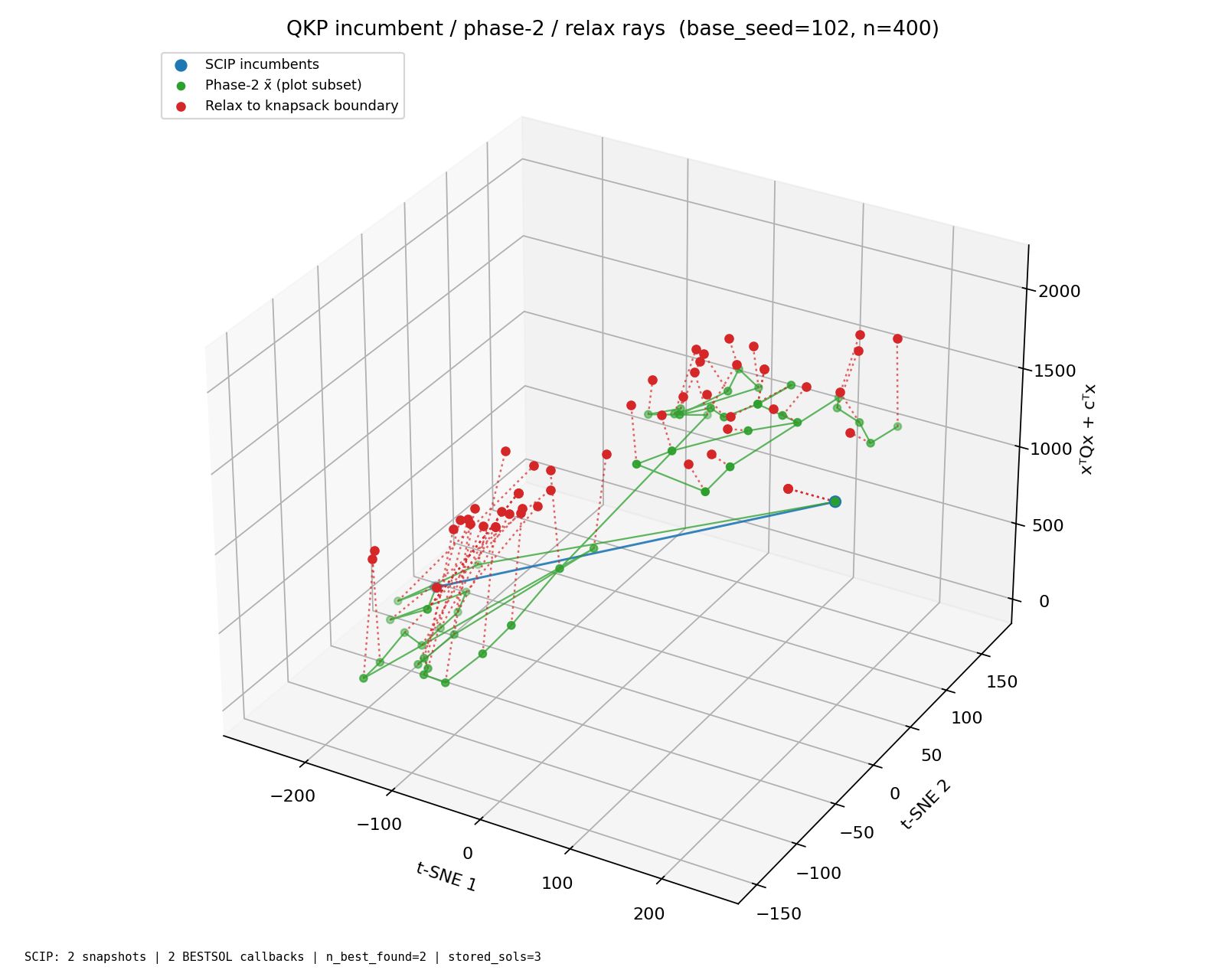}
\caption{Visualization of the tunneling-like experiment. The solver incumbents, relaxed \ADIP{} trajectory, and direct-relaxation endpoints are projected with t-SNE; the vertical axis shows objective value. Basin identity is determined in the original space.}
\label{fig:tsne_tunneling}
\end{figure}
\begin{table}[t]
\centering
\caption{Representative basin analysis of the relaxed landscape.}
\label{tab:basin_analysis}
\small
\begin{tabular}{lcccc}
\toprule
$n$ & Trajectory points & Distinct endpoints & Incumbent endpoint & Best endpoint\\
\midrule
400 & 73 & 45 & 1931.3 & 1971.8\\
400 & 72 & 52 & 1842.5 & 2020.4\\
1000 & 90 & 35 & $4281.0$ & $5083.0$ \\
1000 & 88 & 29 & $5248.0$ & $5874.1$ \\
\bottomrule
\end{tabular}
\end{table}

Overall, the experiments show that \ADIP{} can improve solver incumbents under
matched computational budgets and that optimization in transformer parameter
space can reach regions of the relaxed landscape that are not reached by combinatorial solvers or direct
optimization from the solver incumbent. We therefore view \ADIP{} as a
complementary primal-improvement mechanism.

\section{Limitations}

\ADIP{} is trained separately for each instance and requires a feasible
incumbent from an external solver. The Lagrangian penalty does not guarantee
feasibility of every relaxed intermediate point, although Gumbel-softmax exploration remains close to the original combinatorial structure and the algorithm always
returns the best feasible hard solution encountered. Our tunneling analogy is
geometric and empirical and does not imply global optimization guarantees.
Finally, our experiments focus on random QKP instances; evaluating the method
on larger and more diverse integer-programming benchmarks, as well as
amortizing it across related instances, are important directions for future
work.

\section{Conclusion}

We introduced \ADIP{}, which uses an autoregressive transformer as a nonlinear
reparametrization of relaxed binary decision variables for primal improvement
in integer programming. Optimizing through the transformer induces a
state-dependent geometry in decision space, providing a search mechanism that
differs from combinatorial optimization and from direct optimization of the relaxed variables. On random QKP
instances, \ADIP{} improves state-of-the-art solver incumbents under matched computational
budgets and accesses distinct attraction basins of the relaxed landscape.
Together, these results suggest that autoregressive neural reparametrization can
provide a complementary search mechanism for hard discrete optimization.

\newpage

\bibliography{iclr2027_conference}

@incollection{gallo1980quadratic,
  author    = {Gallo, Giorgio and Hammer, Peter L. and Simeone, Bruno},
  title     = {Quadratic Knapsack Problems},
  booktitle = {Combinatorial Optimization},
  editor    = {Padberg, Manfred W.},
  series    = {Mathematical Programming Studies},
  volume    = {12},
  pages     = {132--149},
  publisher = {Springer},
  address   = {Berlin, Heidelberg},
  year      = {1980},
  doi       = {10.1007/BFb0120892}
}

@article{pisinger2007solution,
  author  = {Pisinger, W. David and Rasmussen, Anders Bo and Sandvik, Rune},
  title   = {Solution of Large Quadratic Knapsack Problems Through Aggressive Reduction},
  journal = {INFORMS Journal on Computing},
  volume  = {19},
  number  = {2},
  pages   = {280--290},
  year    = {2007},
  doi     = {10.1287/ijoc.1050.0172}
}

@inproceedings{jang2017categorical,
  author        = {Jang, Eric and Gu, Shixiang and Poole, Ben},
  title         = {Categorical Reparameterization with Gumbel-Softmax},
  booktitle     = {International Conference on Learning Representations},
  year          = {2017},
  eprint        = {1611.01144},
  archivePrefix = {arXiv},
  primaryClass  = {stat.ML},
  url           = {https://arxiv.org/abs/1611.01144}
}

@inproceedings{maddison2017concrete,
  author        = {Maddison, Chris J. and Mnih, Andriy and Teh, Yee Whye},
  title         = {The Concrete Distribution: A Continuous Relaxation of Discrete Random Variables},
  booktitle     = {International Conference on Learning Representations},
  year          = {2017},
  eprint        = {1611.00712},
  archivePrefix = {arXiv},
  primaryClass  = {cs.LG},
  url           = {https://arxiv.org/abs/1611.00712}
}

@inproceedings{vaswani2017attention,
  author    = {Vaswani, Ashish and Shazeer, Noam and Parmar, Niki and Uszkoreit, Jakob and Jones, Llion and Gomez, Aidan N. and Kaiser, {\L}ukasz and Polosukhin, Illia},
  title     = {Attention Is All You Need},
  booktitle = {Advances in Neural Information Processing Systems},
  volume    = {30},
  pages     = {5998--6008},
  publisher = {Curran Associates, Inc.},
  year      = {2017}
}

@article{achterberg2009scip,
  author  = {Achterberg, Tobias},
  title   = {{SCIP}: Solving Constraint Integer Programs},
  journal = {Mathematical Programming Computation},
  volume  = {1},
  number  = {1},
  pages   = {1--41},
  year    = {2009},
  doi     = {10.1007/s12532-008-0001-1}
}

@book{nemhauser1988integer,
  author    = {Nemhauser, George L. and Wolsey, Laurence A.},
  title     = {Integer and Combinatorial Optimization},
  publisher = {John Wiley \& Sons},
  address   = {New York, NY},
  year      = {1988},
  isbn      = {9780471828198},
  doi       = {10.1002/9781118627372}
}

@article{belotti2013mixed,
  author  = {Belotti, Pietro and Kirches, Christian and Leyffer, Sven
             and Linderoth, Jeff and Luedtke, James and Mahajan, Ashutosh},
  title   = {Mixed-Integer Nonlinear Optimization},
  journal = {Acta Numerica},
  volume  = {22},
  pages   = {1--131},
  year    = {2013},
  doi     = {10.1017/S0962492913000032}
}

@article{bertsimas2016bestsubset,
  author  = {Bertsimas, Dimitris and King, Angela and Mazumder, Rahul},
  title   = {Best Subset Selection via a Modern Optimization Lens},
  journal = {The Annals of Statistics},
  volume  = {44},
  number  = {2},
  pages   = {813--852},
  year    = {2016},
  doi     = {10.1214/15-AOS1388}
}

@article{bixby2012brief,
  author  = {Bixby, Robert E.},
  title   = {A Brief History of Linear and Mixed-Integer Programming
             Computation},
  journal = {Documenta Mathematica},
  volume  = {Extra Volume ISMP},
  pages   = {107--121},
  year    = {2012},
  doi     = {10.4171/DMS/6/16}
}

@incollection{lodi2010mixed,
  author    = {Lodi, Andrea},
  title     = {Mixed Integer Programming Computation},
  booktitle = {50 Years of Integer Programming 1958--2008:
               From the Early Years to the State-of-the-Art},
  editor    = {J{\"u}nger, Michael and Liebling, Thomas M. and Naddef, Denis
               and Nemhauser, George L. and Pulleyblank, William R.
               and Reinelt, Gerhard and Rinaldi, Giovanni
               and Wolsey, Laurence A.},
  pages     = {619--645},
  publisher = {Springer},
  address   = {Berlin, Heidelberg},
  year      = {2010},
  doi       = {10.1007/978-3-540-68279-0_16}
}

@article{gleixner2021miplib,
  author  = {Gleixner, Ambros and Hendel, Gregor and Gamrath, Gerald
             and Achterberg, Tobias and Bastubbe, Michael and Berthold, Timo
             and Christophel, Philipp M. and Jarck, Kati and Koch, Thorsten
             and Linderoth, Jeff and L{\"u}bbecke, Marco E.
             and Mittelmann, Hans D. and Ozyurt, Derya and Ralphs, Ted K.
             and Salvagnin, Domenico and Shinano, Yuji},
  title   = {{MIPLIB} 2017: Data-Driven Compilation of the 6th
             Mixed-Integer Programming Library},
  journal = {Mathematical Programming Computation},
  volume  = {13},
  number  = {3},
  pages   = {443--490},
  year    = {2021},
  doi     = {10.1007/s12532-020-00194-3}
}

@article{berthold2013measuring,
  author  = {Berthold, Timo},
  title   = {Measuring the Impact of Primal Heuristics},
  journal = {Operations Research Letters},
  volume  = {41},
  number  = {6},
  pages   = {611--614},
  year    = {2013},
  doi     = {10.1016/j.orl.2013.08.007}
}

@article{fischetti2003local,
  author  = {Fischetti, Matteo and Lodi, Andrea},
  title   = {Local Branching},
  journal = {Mathematical Programming},
  volume  = {98},
  number  = {1--3},
  pages   = {23--47},
  year    = {2003},
  doi     = {10.1007/s10107-003-0395-5}
}

@article{rothberg2007evolutionary,
  author  = {Rothberg, Edward},
  title   = {An Evolutionary Algorithm for Polishing Mixed Integer
             Programming Solutions},
  journal = {INFORMS Journal on Computing},
  volume  = {19},
  number  = {4},
  pages   = {534--541},
  year    = {2007},
  doi     = {10.1287/ijoc.1060.0189}
}

@inproceedings{geng2025differentiable,
  title     = {Differentiable Integer Linear Programming},
  author    = {Geng, Zijie and Wang, Jie and Li, Xijun and Zhu, Fangzhou and Hao, Jianye and Li, Bin and Wu, Feng},
  booktitle = {The Thirteenth International Conference on Learning Representations},
  year      = {2025},
  url       = {https://openreview.net/forum?id=FPfCUJTsCn}
}

@article{nair2020solving,
  title   = {Solving Mixed Integer Programs Using Neural Networks},
  author  = {Nair, Vinod and Bartunov, Sergey and Gimeno, Felix and von Glehn, Ingrid and Lichocki, Pawel and Lobov, Ivan and O'Donoghue, Brendan and Sonnerat, Nicolas and Tjandraatmadja, Christian and Wang, Pengming and Addanki, Ravichandra and Hapuarachchi, Tharindi and Keck, Thomas and Keeling, James and Kohli, Pushmeet and Ktena, Ira and Li, Yujia and Vinyals, Oriol and Zwols, Yori},
  journal = {arXiv preprint arXiv:2012.13349},
  year    = {2020},
  url     = {https://arxiv.org/abs/2012.13349}
}

@inproceedings{khalil2022mipgnn,
  title     = {{MIP-GNN}: A Data-Driven Framework for Guiding Combinatorial Solvers},
  author    = {Khalil, Elias B. and Morris, Christopher and Lodi, Andrea},
  booktitle = {Proceedings of the AAAI Conference on Artificial Intelligence},
  volume    = {36},
  number    = {9},
  pages     = {10219--10227},
  year      = {2022}
}

@inproceedings{han2023gnn,
  title     = {A {GNN}-Guided Predict-and-Search Framework for Mixed-Integer Linear Programming},
  author    = {Han, Qingyu and Yang, Linxin and Chen, Qian and Zhou, Xiang and Zhang, Dong and Wang, Akang and Sun, Ruoyu and Luo, Xiaodong},
  booktitle = {The Eleventh International Conference on Learning Representations},
  year      = {2023}
}

@article{dipillo1989exact,
  title   = {Exact Penalty Functions in Constrained Optimization},
  author  = {Di Pillo, Gianni and Grippo, Luigi},
  journal = {SIAM Journal on Control and Optimization},
  volume  = {27},
  number  = {6},
  pages   = {1333--1360},
  year    = {1989}
}

@article{bello2017neural,
  title   = {Neural Combinatorial Optimization with Reinforcement Learning},
  author  = {Bello, Irwan and Pham, Hieu and Le, Quoc V. and Norouzi, Mohammad and Bengio, Samy},
  journal = {International Conference on Learning Representations Workshop},
  year    = {2017},
  url     = {https://arxiv.org/abs/1611.09940}
}

@inproceedings{vinyals2015pointer,
  title     = {Pointer Networks},
  author    = {Vinyals, Oriol and Fortunato, Meire and Jaitly, Navdeep},
  booktitle = {Advances in Neural Information Processing Systems},
  volume    = {28},
  year      = {2015}
}

@misc{sornwanee2026diffilo,
  title         = {Differentiable Integer Linear Programming is not Differentiable \& it's not a mere technical problem},
  author        = {Sornwanee, Thanawat},
  year          = {2026},
  eprint        = {2601.17800},
  archivePrefix = {arXiv},
  primaryClass  = {math.OC},
  doi           = {10.48550/arXiv.2601.17800},
  url           = {https://arxiv.org/abs/2601.17800}
}

@inproceedings{tang2025autoregressive,
  title     = {Autoregressive RL Approach for Mixed-Integer Linear Programs},
  author    = {Tang, Paul Mingzheng and Lee, Moses Hong De and Lau, Hoong Chuin},
  booktitle = {Learning and Intelligent Optimization},
  pages     = {1--17},
  year      = {2025},
  publisher = {Springer},
  series    = {Lecture Notes in Computer Science},
  volume    = {15745},
  doi       = {10.1007/978-3-032-09192-5_1}
}

@article{hibatallah2021variational,
  title   = {Variational Neural Annealing},
  author  = {Hibat-Allah, Mohamed and Inack, Estelle M. and Wiersema, Roeland and Melko, Roger G. and Carrasquilla, Juan},
  journal = {Nature Machine Intelligence},
  volume  = {3},
  pages   = {952--961},
  year    = {2021}
}

@article{lopezpiqueres2025constraining,
  title     = {Cons-training Tensor Networks: Embedding and Optimization over Discrete Linear Constraints},
  author    = {Lopez Piqueres, Javier and Chen, Jing},
  journal   = {SciPost Physics},
  volume    = {18},
  pages     = {192},
  year      = {2025},
  publisher = {SciPost},
  doi       = {10.21468/SciPostPhys.18.6.192}
}

@misc{nakaji2024generative,
  title         = {The Generative Quantum Eigensolver ({GQE}) and Its Application for Ground State Search},
  author        = {Nakaji, Kouhei and Kristensen, Lasse Bj{\o}rn and Campos-Gonzalez-Angulo, Jorge A. and Vakili, Mohammad Ghazi and Huang, Haozhe and Bagherimehrab, Mohsen and Gorgulla, Christoph and Wong, FuTe and McCaskey, Alex and Kim, Jin-Sung and Nguyen, Thien and Rao, Pooja and others},
  year          = {2024},
  eprint        = {2401.09253},
  archivePrefix = {arXiv},
  primaryClass  = {quant-ph},
  doi           = {10.48550/arXiv.2401.09253}
}

@inproceedings{jacot2018neural,
  title     = {Neural Tangent Kernel: Convergence and Generalization in Neural Networks},
  author    = {Jacot, Arthur and Gabriel, Franck and Hongler, Cl{\'e}ment},
  booktitle = {Advances in Neural Information Processing Systems},
  volume    = {31},
  year      = {2018}
}

@article{fortet1960applications,
  author  = {Fortet, Robert},
  title   = {Applications de l'alg{\`e}bre de Boole en recherche op{\'e}rationnelle},
  journal = {Revue Fran{\c{c}}aise de Recherche Op{\'e}rationnelle},
  volume  = {4},
  number  = {14},
  pages   = {17--26},
  year    = {1960}
}

@book{martello1990knapsack,
  author    = {Martello, Silvano and Toth, Paolo},
  title     = {Knapsack Problems: Algorithms and Computer Implementations},
  publisher = {John Wiley \& Sons},
  address   = {Chichester},
  year      = {1990}
}

@article{goemans1995improved,
  author  = {Goemans, Michel X. and Williamson, David P.},
  title   = {Improved Approximation Algorithms for Maximum Cut and
             Satisfiability Problems Using Semidefinite Programming},
  journal = {Journal of the ACM},
  volume  = {42},
  number  = {6},
  pages   = {1115--1145},
  year    = {1995},
  doi     = {10.1145/227683.227684}
}

@incollection{cornuejols1990uncapacitated,
  author    = {Cornu{\'e}jols, G{\'e}rard and Nemhauser, George L. and Wolsey, Laurence A.},
  title     = {The Uncapacitated Facility Location Problem},
  booktitle = {Discrete Location Theory},
  editor    = {Mirchandani, P. B. and Francis, R. L.},
  publisher = {John Wiley \& Sons},
  address   = {New York},
  pages     = {119--171},
  year      = {1990}
}

@book{pinedo2016scheduling,
  author    = {Pinedo, Michael L.},
  title     = {Scheduling: Theory, Algorithms, and Systems},
  edition   = {5},
  publisher = {Springer},
  year      = {2016},
  doi       = {10.1007/978-3-319-26580-3}
}

@misc{hojny2025scip10,
  title         = {The {SCIP} Optimization Suite 10.0},
  author        = {Hojny, Christopher and Besan{\c{c}}on, Mathieu and Bestuzheva, Ksenia
                   and Borst, Sander and Dion{\'i}sio, Jo{\~a}o and Ehls, Johannes
                   and Eifler, Leon and Ghannam, Mohammed and Gleixner, Ambros
                   and G{\"o}{\ss}, Adrian and Hoen, Alexander
                   and von Holly-Ponientzietz, Jacob and van der Hulst, Rolf
                   and Kamp, Dominik and Koch, Thorsten and Kofler, Kevin
                   and Lentz, Jurgen and L{\"u}bbecke, Marco and Maher, Stephen J.
                   and Meinhold, Paul Matti and Mexi, Gioni and Mohr, Til
                   and M{\"u}hmer, Erik and Patel, Krunal Kishor and Pfetsch, Marc E.
                   and Pokutta, Sebastian and Reinartz Groba, Chantal
                   and Serrano, Felipe and Shinano, Yuji and Turner, Mark
                   and Vigerske, Stefan and Walter, Matthias and Weninger, Dieter
                   and Xu, Liding},
  year          = {2025},
  eprint        = {2511.18580},
  archivePrefix = {arXiv},
  primaryClass  = {math.OC},
  doi           = {10.48550/arXiv.2511.18580}
}

@inproceedings{yun2020transformers,
  title     = {Are Transformers Universal Approximators of Sequence-to-Sequence Functions?},
  author    = {Yun, Chulhee and Bhojanapalli, Srinadh and Rawat, Ankit Singh and Reddi, Sashank J. and Kumar, Sanjiv},
  booktitle = {International Conference on Learning Representations},
  year      = {2020},
  url       = {https://openreview.net/forum?id=ByxRM0Ntvr}
}

@article{huangfu2018highs,
  author  = {Qi Huangfu and J. A. J. Hall},
  title   = {Parallelizing the dual revised simplex method},
  journal = {Mathematical Programming Computation},
  volume  = {10},
  number  = {1},
  pages   = {119--142},
  year    = {2018},
  doi     = {10.1007/s12532-017-0130-5}
}

@incollection{forrest2005cbc,
  author    = {John Forrest and Robin Lougee-Heimer},
  title     = {{CBC} User Guide},
  booktitle = {Tut{OR}ials in Operations Research: Emerging Theory, Methods, and Applications},
  publisher = {INFORMS},
  year      = {2005},
  pages     = {257--277},
  doi       = {10.1287/educ.1053.0020}
}

@software{cbc2026,
  author       = {{COIN-OR}},
  title        = {{CBC}: COIN-OR Branch-and-Cut Solver},
  version      = {2.10.13},
  year         = {2026},
  url          = {https://github.com/coin-or/Cbc},
  note         = {Open-source mixed-integer programming solver}
}

@inproceedings{perron2023ortools,
  author    = {Perron, Laurent and Didier, Fr{\'e}d{\'e}ric and Gay, Steven},
  title     = {The {CP-SAT-LP} Solver},
  booktitle = {29th International Conference on Principles and Practice of
               Constraint Programming (CP 2023)},
  series    = {Leibniz International Proceedings in Informatics},
  volume    = {280},
  pages     = {3:1--3:2},
  year      = {2023},
  publisher = {Schloss Dagstuhl -- Leibniz-Zentrum f{\"u}r Informatik},
  doi       = {10.4230/LIPIcs.CP.2023.3}
}

@misc{leng2023quantum,
  title         = {Quantum Hamiltonian Descent},
  author        = {Leng, Jiaqi and Hickman, Ethan and Li, Joseph and Wu, Xiaodi},
  year          = {2023},
  eprint        = {2303.01471},
  archivePrefix = {arXiv},
  primaryClass  = {quant-ph},
  doi           = {10.48550/arXiv.2303.01471}
}

@inproceedings{chaudhary2025qihd,
  title     = {Quantum-Inspired Hamiltonian Descent for Mixed-Integer Quadratic Programming},
  author    = {Chaudhary, Shreya and Cheng, Jinglei and Kushnir, Samuel
               and Leng, Jiaqi and Liu, Pengyu and Peng, Yuxiang
               and Wang, Hanrui and Wu, Xiaodi},
  booktitle = {NeurIPS 2025 Workshop on GPU-Accelerated and Scalable Optimization (ScaleOPT)},
  year      = {2025}
}
\bibliographystyle{iclr2027_conference}

\newpage
\appendix
\section{Appendix}

\subsection{Additional details on related works}

Our experimental setup is directly inspired by the constrained tensor-network approach of \citet{lopezpiqueres2025constraining}, which introduces constrained matrix product states (MPS) whose block structure exactly restricts their support to binary configurations satisfying prescribed linear constraints. The authors optimize the resulting generative model using an unsupervised annealing procedure and evaluate it on the QKP, where they report improvements over SCIP on random instances. In fact, their QKP experiments form the basis of our experimental benchmark: following \citet{lopezpiqueres2025constraining}, we consider randomly generated integer-valued quadratic objectives and knapsack constraints and compare the quality of solutions obtained against SCIP under controlled computational budgets. The two approaches differ fundamentally, however, in how feasibility and optimization are represented. Constrained MPS encode the linear constraints exactly into the tensor-network architecture, so that samples have support only on the feasible set. This is a serious limitation of their method, as in general, exact encoding of linear constraints over binary variables via MPS requires exponentially many parameters. Our transformer does not hard-code the feasible set into its architecture; instead, it is first initialized on a feasible incumbent returned by SCIP and subsequently optimized using a differentiable objective together with a Lagrangian feasibility penalty. Moreover, while the constrained-MPS algorithm updates a probability distribution over feasible configurations through an annealed sample-and-train procedure, our method differentiates the relaxed QKP objective end-to-end through an autoregressive map $\theta\mapsto\tilde{x}_{\theta}$ and thereby performs optimization directly in the shared parameter space of the transformer. Thus, while our QKP benchmark and its motivation are explicitly inherited from \citet{lopezpiqueres2025constraining}, our method investigates a different mechanism for navigating the combinatorial landscape: differentiable autoregressive reparametrization, which has better scalability potential than exact constraint encoding via MPS.

The second conceptual connection is the Generative Quantum Eigensolver (GQE) of \citet{nakaji2024generative}. GQE departs from conventional variational quantum eigensolvers by placing the trainable parameters in a classical generative model rather than directly in a parameterized quantum circuit. In its GPT-QE realization, a transformer autoregressively generates sequences of quantum-circuit components, and the generative model is iteratively trained so that the circuits it produces attain lower energies. This perspective is closely related to the motivation of our method: in both cases, a difficult optimization problem is lifted from its original representation into the parameter space of a classical autoregressive generative model, whose shared parameters induce correlated changes in the generated object. The crucial difference is that GQE generates descriptions of quantum circuits whose energies must subsequently be evaluated, whereas our transformer directly generates the binary decision variables of the integer program through a differentiable relaxation. Consequently, our objective can be backpropagated through the generated variables to the transformer weights, making $\theta\mapsto\tilde{x}_{\theta}\mapsto f(\tilde{x}_{\theta})$ an explicit differentiable reparametrization of the original optimization landscape. Our method therefore shares GQE's broader quantum-inspired idea of optimizing through a classical generative representation, but applies it entirely classically and directly to constrained discrete optimization, with the additional use of solver-incumbent initialization and Gumbel--softmax relaxation to obtain a differentiable, near-discrete optimization trajectory.

A complementary connection is Quantum Hamiltonian Descent (QHD)
\citep{leng2023quantum}, which formulates continuous non-convex optimization
as a time-dependent quantum Hamiltonian evolution. QHD is motivated in part by
the ability of quantum dynamics to exploit trajectories that are inaccessible to
classical local descent, including tunneling across barriers in the objective
landscape. This motivation is closely related to our empirical tunneling analogy,
but the underlying mechanisms are fundamentally different. QHD requires a
quantum Hamiltonian evolution over the original continuous variables, whereas
ADIP is entirely classical: it changes the optimization geometry by lifting the
relaxed decision vector to the parameter space of an autoregressive transformer.
In particular, our induced dynamics
$\dot{\tilde{x}}=J_\theta J_\theta^\top\nabla_{\tilde{x}}F$
provide a classical, state-dependent reparametrization of the gradient field,
while the observed basin crossing arises from optimization through this neural
representation together with stochastic near-discrete exploration.

Recent concurrent work on Quantum-Inspired Hamiltonian Descent (QIHD)
\citep{chaudhary2025qihd} similarly transfers Hamiltonian intuition to a
classical optimizer for mixed-integer quadratic programs. QIHD explicitly
simulates momentum-position Hamiltonian dynamics with constraint penalties,
whereas ADIP obtains non-Euclidean decision-space dynamics implicitly through
the Jacobian of an autoregressive neural reparametrization.

\subsection{Experimental details}
\label{app:experimental-details}

\paragraph{Compute environment.}
All reported runs were executed on a single Apple MacBook Pro with an Apple M4~Max CPU
(14 cores: 10 performance + 4 efficiency) and 36\,GB unified memory.
Training used CPU-only PyTorch; no GPU acceleration was used as transformers used are small and GPU overheads dominate here.
Warm-start solvers were restricted to a single thread.
BLAS/OpenMP thread counts were likewise set to~1 for reproducibility.
Wall-clock timing was used for SCIP, HiGHs, CP-SAT and CBC with fixed random seeds.

\paragraph{Instance generation.}
As in the main text, for each size $n\in\{400,1000,10000\}$, for each experiment, we sample
$Q_{ij}\sim\mathrm{Unif}\{-5,\ldots,5\}$ (integer, nonsymmetric),
$w_i\sim\mathrm{Unif}\{0,\ldots,5\}$, and capacity
$C=\tfrac14\sum_i w_i$.
Instances are identified by a base seed. The generation code is released in the repository.

\paragraph{Warm-start models.}
SCIP solves the native nonconvex quadratic integer program (quadratic epigraph).
HiGHS and CBC receive a Fortet/McCormick linearization of the binary quadratic,
introducing an auxiliary binary $y_{ij}$ for each nonzero product term together with
$y_{ij}\le x_i$, $y_{ij}\le x_j$, $y_{ij}\ge x_i+x_j-1$.
CP-SAT uses the same auxiliaries but encodes $y_{ij}=x_i x_j$ via
OR-Tools \texttt{AddMultiplicationEquality} (equivalent on $\{0,1\}^n$).
For dense indefinite $Q$, the linearized models have $O (n^2)$ binaries and
constraints, which becomes severe already at $n=1000$ for LP-based MIP solvers;
CP-SAT's SAT/CP search often returns a feasible incumbent sooner under the same
500\,s budget.

\paragraph{Matched-budget protocol.}
Each base solver is allotted $T_{\mathrm{seed}}=500$\,s to produce $x^{\mathrm{inc}}$.
ADIP then runs with a Phase-2 wall-clock cap $T_{\mathrm{train}}=500$\,s
(and at most $K_2=2000$ Adam steps, with early stopping on plateau).
The matched baseline continues the same solver alone for
$T_{\mathrm{total}}=T_{\mathrm{seed}}+T_{\mathrm{train}}=1000$\,s.
We report relative improvement
\[
\mathrm{Imp}_{\mathrm{time}}(\%)
=
100\cdot
\frac{f(x^{\mathrm{ADIP}})-f(x^{\mathrm{OSS}}_{T_{\mathrm{total}}})}
{\lvert f(x^{\mathrm{OSS}}_{T_{\mathrm{total}}})\rvert},
\]
and write $\infty$ when the matched solver objective is identically zero while ADIP
returns a strictly positive feasible objective (common for linearized solvers at
large $n$, and for SCIP at $n=10000$ under our budgets).

\paragraph{Architecture and optimization.}
The model is a two-layer causal transformer over the binary alphabet with
embedding dimension $d=64$, $4$ heads, feed-forward width $d_{\mathrm{ff}}=256$,
and banded sparse attention retaining a fraction $0.1$ of previous positions.
Phase~1 minimizes the smoothed teacher-forcing CE~\eqref{eq:init_loss} with Adam,
$K_1=150$ maximum steps, learning rate $\eta_1=10^{-3}$,
plateau patience $P_1=12$ and minimum delta $\delta_1=10^{-5}$,
and target smoothing $\varepsilon=0.001$.
Phase~2 maximizes the soft Lagrangian~\eqref{eq:phase2_objective} with Adam,
Gumbel--softmax straight-through estimation at temperature $\tau=0.5$,
soft weight $w_{\mathrm{soft}}=1$, and plateau patience $P_2=45$
(minimum delta $\delta_2=10^{-5}$).
Size-dependent Phase~2 knobs are:
\begin{center}
\begin{tabular}{lcc}
\toprule
$n$ & Phase-2 LR $\eta_2$ & knapsack penalty $\lambda$ \\
\midrule
$400$ & $10^{-3}$ (same as $\eta_1$) & $100$ \\
$1000$, $10000$ & $10^{-4}$ & $500$ \\
\bottomrule
\end{tabular}
\end{center}
At every Phase~2 step we hard-decode $\hat x$, test knapsack feasibility, and retain
the best feasible objective encountered.
Phase-1 CE uses a one-pass implementation; Phase-2 rollouts use KV caching with
chunked gradient checkpointing to keep long sequences tractable.

\paragraph{Objective-value trajectory (Fig.~\ref{fig:training_curve}).}
Figure~\ref{fig:training_curve} compares the matched SCIP-alone curve over $1000$\,s with the
SCIP+$T_{\mathrm{train}}$ ADIP curve.
The plot uses a broken $y$-axis (near-zero band and high-objective zoom) and a
late-time expanding $x$-scale $-\log(T-t)$ to emphasize improvements near the end
of the budget.

\paragraph{Basin / tunneling diagnostic (Sec.~\ref{sec:tunneling_experiments}, Fig.~\ref{fig:tsne_tunneling}).}
Along a recorded ADIP trajectory, from each point
(and from SCIP incumbent snapshots), we run projected gradient ascent of the continuous
objective subject to the knapsack boundary
($w^\top \tilde x \leq C$), with step size $0.05$, at most $3000$ iterations, and
tolerances $10^{-6}$ (capacity) / $10^{-8}$ (projected gradient).
Two endpoints are counted as distinct if their $\ell_2$ distance exceeds $10^{-3}$.
Table~\ref{tab:basin_analysis} reports trajectory length, the number of distinct endpoints,
and the objective at the incumbent-started vs.\ best endpoint.
The 3D visualization in Fig.~\ref{fig:tsne_tunneling} is a t-SNE projection of the same
points (perplexity $30$) with objective on the vertical axis; basin identity is
always decided in the original $[0,1]^n$ space, not in the t-SNE embedding.

\paragraph{Software.}
Experiments used Python~$\ge 3.10$, NumPy~$<2$, PyTorch~$\ge 2$,
\texttt{pyscipopt} / SCIP~10, HiGHS (\texttt{highspy}), and OR-Tools (CBC + CP-SAT).
Pinned source hashes and reproduction commands are included in the repository.

\paragraph{Additional experimental figure.}
\label{app:gap-scatter}

Figure~\ref{fig:gap-scatter} plots, for the SCIP $n=400$ matched-budget sweep
(100 seeds), the percentage improvement of SCIP+ADIP over the matched SCIP
objective against the SCIP MIP gap at $T_{\mathrm{seed}}$.
Larger residual gaps after the seed phase tend to leave more room for ADIP
improvement, consistent with Sec.~\ref{sec:main-exp}, but the gap is not a
deterministic predictor of ADIP success.

\begin{figure}[t]
\centering
\includegraphics[width=0.85\linewidth]{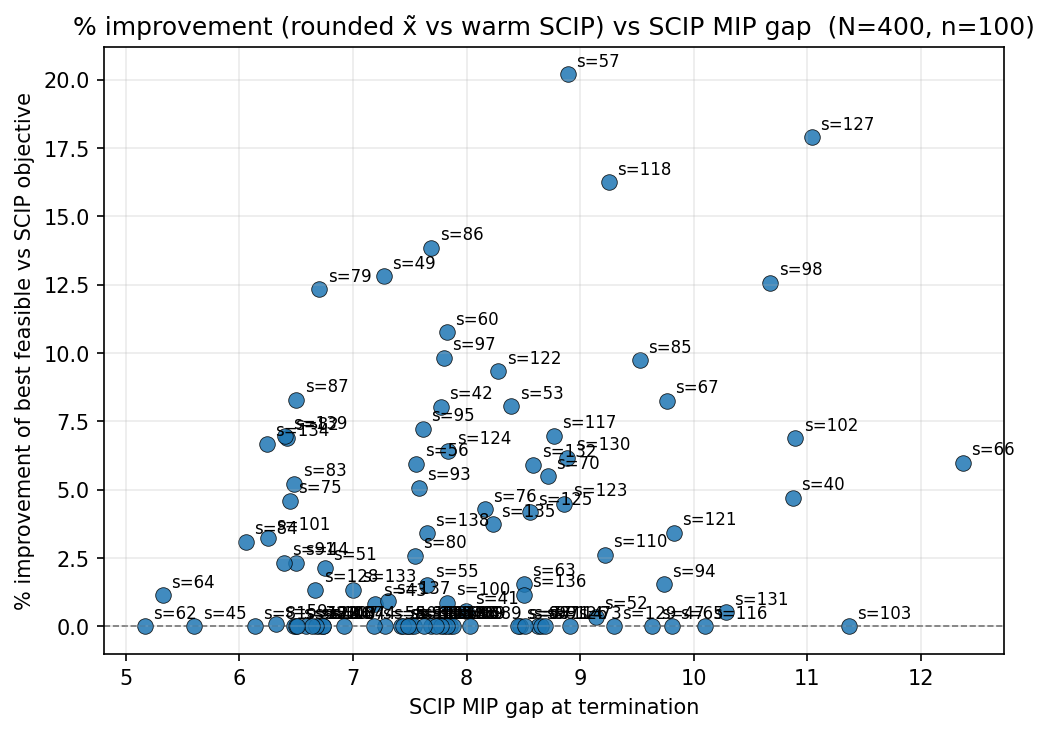}
\caption{Percentage improvement of SCIP+ADIP vs.\ matched SCIP, plotted against
the SCIP MIP gap after $T_{\mathrm{seed}}=500$\,s ($n=400$, 100 seeds).}
\label{fig:gap-scatter}
\end{figure}

\subsection{Proof of Theorem \ref{thm:incumbent-hardness}}\label{incumbent-hardness}

Membership in NP is immediate: given a candidate $x\in{0,1}^n$, one can verify both feasibility and strict improvement in polynomial time.

For NP-hardness, consider the decision version of QKP. Given $Q\in\mathbb{Z}^{n\times n}$, strictly positive weights $w\in\mathbb{Z}_{>0}^n$, capacity $C\in\mathbb{Z}_{>0}$, and an integer threshold $K\geq 1$, the problem asks whether there exists $x\in{0,1}^n$ such that
\begin{equation}
w^\top x \leq C,
\qquad
x^\top Qx \geq K.
\end{equation}
This decision problem is NP-hard \citep{gallo1980quadratic,pisinger2007solution}.

Construct an augmented instance by introducing one additional binary variable $z$. Give this variable weight $C$ and define
\begin{equation}
Q'
=
\begin{pmatrix}
Q & 0 \\
0 & K-1
\end{pmatrix},
\qquad
w'
=
\begin{pmatrix}
w \
C
\end{pmatrix}.
\end{equation}
The capacity remains $C$. Consider the explicitly known incumbent
\begin{equation}
x^{\inc}
=
(0,\ldots,0,1).
\end{equation}
It is feasible and has objective value $K-1$.

Because all original weights are strictly positive, any feasible solution with $x_{n+1}=1$ must set every original variable to zero: the additional variable alone exhausts the entire capacity. Hence no feasible solution with $x_{n+1}=1$ can strictly improve upon $x^{\inc}$. A strictly improving solution must therefore have $x_{n+1}=0$, in which case its objective is precisely $x^\top Qx$. Since the objective is integer-valued,
\begin{equation}
x^\top Qx > K-1
\quad\Longleftrightarrow\quad
x^\top Qx \geq K.
\end{equation}
Thus the augmented instance admits a feasible solution strictly improving upon $x^{\inc}$ if and only if the original quadratic knapsack decision instance is a YES instance. The construction is polynomial, which proves NP-hardness.

\subsection{Proof of Proposition \ref{prop:sequence-surjectivity}}\label{sequence-surjectivity}

Fix an arbitrary $x^\star\in\{0,1\}^n$. Consider the constant sequence-to-sequence map
\begin{equation}
f_{x^\star}(z)=x^\star
\end{equation}
on any compact input domain $\mathcal{K}$. This map is continuous. By the universal approximation theorem for transformers with positional encodings \citep{yun2020transformers}, for every $\varepsilon>0$ there exists a transformer $T_\theta\in\mathcal{T}$ such that
\begin{equation}
\sup_{z\in\mathcal{K}}
\|T_\theta(z)-f_{x^\star}(z)\|_\infty
<
\varepsilon.
\end{equation}
In particular, there is an input initiating the autoregressive rollout yielding,
\begin{equation}
\|\tilde{x}_\theta-x^\star\|_\infty
<
\varepsilon.
\end{equation}

Choose $\varepsilon<1/2$. If $x_i^\star=0$, then
\begin{equation}
\tilde{x}_{\theta,i}<\varepsilon<1/2,
\end{equation}
whereas if $x_i^\star=1$, then
\begin{equation}
\tilde{x}_{\theta,i}>1-\varepsilon>1/2.
\end{equation}
Thresholding each coordinate at $1/2$ therefore yields
\begin{equation}
\hat{x}_{\theta,i}=x_i^\star
\end{equation}
for every $i$. Since $x^\star$ was arbitrary, the hard-decoded transformer parametrization covers the entire binary hypercube $\{0,1\}^n$.

\subsection{Proof of Corollary \ref{cor:theta-hardness}}\label{theta-hardness}

By Proposition~\ref{prop:sequence-surjectivity},
\begin{equation}
\{\hat{x}_{\theta}:\theta\in\Theta\}
=
\{0,1\}^n.
\end{equation}
Therefore optimization over the hard-decoded transformer contains the original binary quadratic program exactly:
\begin{equation}
\max_{\theta:,A\hat{x}_{\theta}\leq b}
\hat{x}_{\theta}^{\top}Q\hat{x}_{\theta}
=
\max_{x\in{0,1}^n:,Ax\leq b}
x^\top Qx.
\end{equation}
Since the decision version of the problem on the right-hand side is NP-hard, so is global optimization over $\theta$.

\subsection{Proof of Proposition \ref{prop:lambda-bound}}\label{lambda-bound}

Because $A$, $b$, and $x$ are integer-valued, $Ax-b$ is integer-valued. Hence if a binary vector $x$ is infeasible, at least one constraint is violated by at least one unit, and therefore
\begin{equation}
\Phi(x)\geq1.
\end{equation}

Furthermore, since $x_i x_j\in{0,1}$,
\begin{equation}
x^\top Qx
=
\sum_{i,j}Q_{ij}x_ix_j
\leq
\sum_{i,j}\max{Q_{ij},0}
=
U_Q.
\end{equation}
Thus every infeasible binary $x$ satisfies
\begin{align}
x^\top Qx-\lambda\Phi(x)
&\leq
U_Q-\lambda <
f_{\inc},
\end{align}
where the final inequality follows from the condition in Proposition~\ref{prop:lambda-bound}.

Since $x^{\inc}$ is feasible, $\Phi(x^{\inc})=0$, and its penalized objective is exactly $f_{\inc}$. Therefore no infeasible binary point can globally maximize the penalized objective.

\subsection{Proof of Proposition \ref{prop:gumbel-gradient}}\label{gumbel-gradient}

The almost-sure convergence follows directly from the zero-temperature limit of the sigmoid:
\begin{equation}
\sigma\left(\frac{v}{\tau}\right)
\longrightarrow
\mathbf{1}_{v>0}
\end{equation}
for every $v\neq0$. 

Let $s'(u)=\sigma(u)(1-\sigma(u))$. Since $V=\Delta+G$ has density $q(v-\Delta)$,
\begin{align}
\mathbb{E}[D_\tau]
&=
\int_{\mathbb{R}}
\frac{1}{\tau}
s'\left(\frac{v}{\tau}\right)
q(v-\Delta),dv.
\end{align}
Using the substitution $v=\tau u$ gives
\begin{equation}
\mathbb{E}[D_\tau]
=
\int_{\mathbb{R}}
s'(u)
q(\tau u-\Delta),du.
\end{equation}
By dominated convergence and $\int_{\mathbb{R}}s'(u),du=1$,
\begin{equation}
\mathbb{E}[D_\tau]
\longrightarrow
q(-\Delta).
\end{equation}

Similarly,
\begin{align}
\mathbb{E}[D_\tau^2]
&=
\int_{\mathbb{R}}
\frac{1}{\tau^2}
s'\left(\frac{v}{\tau}\right)^2
q(v-\Delta),dv \
&=
\frac{1}{\tau}
\int_{\mathbb{R}}
s'(u)^2
q(\tau u-\Delta),du.
\end{align}
Since
\begin{equation}
\int_{\mathbb{R}}s'(u)^2,du
=
\int_0^1 t(1-t),dt
=
\frac{1}{6},
\end{equation}
we obtain
\begin{equation}
\mathbb{E}[D_\tau^2]
=
\frac{q(-\Delta)}{6\tau}
+
o(\tau^{-1}).
\end{equation}
Because $\mathbb{E}[D_\tau]$ converges to the finite value $q(-\Delta)$, the second moment dominates asymptotically, yielding
$\operatorname{Var}(D_\tau)=\Theta(\tau^{-1})$.

\subsection{Proof of Theorem \ref{thm:path-dependent}}\label{path-dependent}

We start by stating a formal version of the theorem.

Consider one attention head with input representation $H$, query, key, and value matrices
\begin{equation}
Q = H W_Q,\qquad
K = H W_K,\qquad
V = H W_V,
\end{equation}
and causally masked attention
\begin{equation}
\operatorname{Attn}(H)
=
\operatorname{softmax}
\left(
\frac{QK^\top}{\sqrt{d_k}}+M
\right)V,
\label{eq:causal-attention}
\end{equation}
where $M$ is the causal mask.

\begin{theorem}[Path-dependent induced geometry]
\label{thm:path-dependent}
Let $\theta$ be the parameters of a causal transformer and suppose that, for some attention head, the output of the relaxed autoregressive rollout depends nontrivially on at least one of its query or key matrices, i.e.
\begin{equation}
J_Q
:=
\frac{\partial \tilde{x}_\theta}{\partial W_Q}
\neq 0
\qquad\text{or}\qquad
J_K
:=
\frac{\partial \tilde{x}_\theta}{\partial W_K}
\neq 0.
\label{eq:attention-nondegeneracy}
\end{equation}
For every $c>0$, define a reparametrized transformer $\theta^{(c)}$ by
\begin{equation}
W_Q^{(c)}=cW_Q,
\qquad
W_K^{(c)}=c^{-1}W_K,
\label{eq:qk-rescaling}
\end{equation}
with every other parameter unchanged.

Then:

\begin{enumerate}
\item the two transformers define exactly the same autoregressive map,
\begin{equation}
\tilde{x}_{\theta^{(c)}}(z)
=
\tilde{x}_{\theta}(z)
\qquad
\text{for every input or autoregressive prefix }z;
\label{eq:same-transformer-function}
\end{equation}

\item nevertheless, except in the degenerate case $J_Q=J_K=0$, there exists $c\neq1$ for which
\begin{equation}
    J_{\theta^{(c)}}J_{\theta^{(c)}}^\top
    \neq
    J_{\theta}J_{\theta}^\top.
    \label{eq:different-induced-metric}
\end{equation}

\item there exists a QKP objective for which the two transformers induce different instantaneous velocities in relaxed decision space
\begin{equation}
\dot{\tilde{x}}_{\theta} \neq
\dot{\tilde{x}}_{\theta^{(c)}}.
\end{equation}
\end{enumerate}

In other words there exist two transformers representing exactly the same binary solution but inducing different local optimization geometries in relaxed decision space.
\end{theorem}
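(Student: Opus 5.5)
I plan to prove the three items in order, using the bilinear symmetry $(W_Q,W_K)\mapsto(cW_Q,c^{-1}W_K)$ together with the chain rule from Theorem~\ref{thm:induced-flow}.

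\textbf{Item 1.} In every rollout step, $W_Q$ and $W_K$ enter the network only through the attention scores $QK^\top=HW_QW_K^\top H^\top$. The matrix $W_Q W_K^\top$ is unchanged under the rescaling in Eq.~\ref{eq:qk-rescaling}, and all other parameters are fixed. Hence every head output, logit, Gumbel--softmax value (for fixed noise) and relaxed embedding in Eq.~\ref{eq:relaxed_embedding} coincides step by step. I would make this precise by induction on the rollout position $i$: equal prefixes give equal hidden states $H$, hence equal logits $\ell_i$, hence equal $\tilde{x}_i$. This establishes Eq.~\ref{eq:same-transformer-function}.

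\textbf{Jacobian blocks.} Write $g(\theta^{(c)})=\tilde{x}_{\theta^{(c)}}$. By item 1, $g(cW_Q,c^{-1}W_K,\text{rest})=g(W_Q,W_K,\text{rest})$ holds for all values of the weights, not just at one point. Differentiating this identity with respect to $W_Q$, $W_K$ and the remaining parameters gives
\[
J_Q^{(c)}=c^{-1}J_Q,\qquad J_K^{(c)}=cJ_K,\qquad J_{\mathrm{rest}}^{(c)}=J_{\mathrm{rest}}.
\]
Therefore
\[
K(c):=J_{\theta^{(c)}}J_{\theta^{(c)}}^\top=c^{-2}J_QJ_Q^\top+c^{2}J_KJ_K^\top+J_{\mathrm{rest}}J_{\mathrm{rest}}^\top .
\]

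\textbf{Item 2.} Suppose, for contradiction, that $K(c)=K(1)$ for all $c>0$. Then $(c^{-2}-1)J_QJ_Q^\top+(c^2-1)J_KJ_K^\top=0$ for every $c$. Taking traces gives $(c^{-2}-1)a+(c^2-1)b=0$ with $a=\|J_Q\|_F^2\ge0$ and $b=\|J_K\|_F^2\ge0$. Choose $c=2$: this gives $b=a/4$. Choose $c=1/2$: this gives $a=b/4$. Together these force $a=b=0$, contradicting the nondegeneracy assumption Eq.~\ref{eq:attention-nondegeneracy}. So some $c\in\{2,1/2\}$ already gives Eq.~\ref{eq:different-induced-metric}.

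\textbf{Item 3.} By Theorem~\ref{thm:induced-flow}, the two velocities are $K(c)\nabla F$ and $K(1)\nabla F$. Their difference is $\Delta K\,\nabla F$, where $\Delta K=K(c)-K(1)\ne0$ is symmetric. I need a QKP objective $F_\lambda$ whose gradient $v=\nabla_{\tilde{x}}F_\lambda(\tilde{x}_\theta)$ is not in $\ker\Delta K$. Since $\Delta K\neq0$, some unit vector $v$ satisfies $\Delta K v\neq 0$. The task is to realize that $v$ as a gradient:
\begin{itemize}
\item Take $\lambda=0$, or note that the penalty contributes zero gradient at a strictly feasible point.
\item Use $F(\tilde{x})=\tilde{x}^\top Q\tilde{x}$, whose gradient is $(Q+Q^\top)\tilde{x}$.
\item Set $Q=v\mathbf 1^\top/(2\mathbf 1^\top\tilde{x})$ when $\mathbf 1^\top\tilde{x}>0$. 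Otherwise add a diagonal term, using $\tilde{x}_i\in(0,1)$ strictly for finite logits.
\end{itemize}

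The main obstacle is exactly this realizability step. I must check that a quadratic form (allowing nonsymmetric or rescaled $Q$, possibly with a large multiple of $I$ chosen so $\nabla F$ is in general position) produces a gradient with $\Delta K\nabla F\ne0$. The cleanest route is to observe that $Q\mapsto(Q+Q^\top)\tilde{x}$ is a linear map. Since $\tilde{x}\ne0$, it is onto $\R^n$: taking $Q=\tfrac12 v e_j^\top$ with $\tilde{x}_j>0$, suitably scaled, gives any $v$. For a knapsack constraint, I would take capacity $C$ large enough that $\tilde{x}_\theta$ is strictly feasible, so $\Phi$ is locally zero and the penalty adds nothing to the gradient.
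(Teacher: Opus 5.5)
Your item 1, the Jacobian scaling $J_Q^{(c)}=c^{-1}J_Q$, $J_K^{(c)}=cJ_K$, $J_{\mathrm{rest}}^{(c)}=J_{\mathrm{rest}}$, and your item 2 are correct and follow the paper's route. Three of your choices improve on it. Differentiating the global invariance identity is a tidier justification than the paper's perturbation sketch. The trace argument at $c\in\{2,1/2\}$ replaces the paper's limits $c\to\infty$, $c\to0$ and gives an explicit $c$. Your remark that the penalty contributes no gradient at a strictly feasible point (or with $\lambda=0$) fills in something the paper silently skips.

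The problem is the realizability step in item 3. Both explicit constructions you give fail because they forget the $Q^\top\tilde{x}$ term in $\nabla(\tilde{x}^\top Q\tilde{x})=(Q+Q^\top)\tilde{x}$:
\begin{itemize}
\item For $Q=v\mathbf 1^\top/(2\,\mathbf 1^\top\tilde{x})$ one gets $(Q+Q^\top)\tilde{x}=\tfrac12 v+\frac{v^\top\tilde{x}}{2\,\mathbf 1^\top\tilde{x}}\mathbf 1$.
\item For $Q=\tfrac12 v e_j^\top$ one gets $\tfrac12\bigl(\tilde{x}_j v+(v^\top\tilde{x})e_j\bigr)$.
\end{itemize}
Neither is a multiple of $v$ in general, and no rescaling repairs this. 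So you have not shown that $\Delta K\,\nabla F\neq 0$; the extra term can land the gradient in $\ker\Delta K$.

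The surjectivity of $Q\mapsto(Q+Q^\top)\tilde{x}$ that you assert is true, but it needs a correct witness. The simplest is the diagonal idea you mention only as a fallback. Since every $\tilde{x}_i\in(0,1)$, the choice $Q=\operatorname{diag}\bigl(v_i/(2\tilde{x}_i)\bigr)$ gives $2Q\tilde{x}=v$ exactly. Alternatively, use a symmetric matrix, as the paper does. With $r=\tilde{x}^\top\tilde{x}$, the matrix $S=(v\tilde{x}^\top+\tilde{x}v^\top)/r-(\tilde{x}^\top v)\,\tilde{x}\tilde{x}^\top/r^2$ satisfies $S\tilde{x}=v$, so $Q=\tfrac12 S$ has gradient $v$. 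With either fix, $\Delta K\,\nabla F=\Delta K\,v\neq0$ and your argument goes through.
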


\begin{proof}
Under the rescaling stated in Theorem~\ref{thm:path-dependent}, the query and key matrices become
\begin{equation}
Q^{(c)}
=
H(cW_Q)
=
cQ
\end{equation}
and
\begin{equation}
K^{(c)}
=
H(c^{-1}W_K)
=
c^{-1}K.
\end{equation}
Their product is therefore invariant:
\begin{equation}
Q^{(c)}(K^{(c)})^\top
=
(cQ)(c^{-1}K)^\top
=
QK^\top.
\end{equation}
Hence the matrix entering the causally masked attention softmax is unchanged. Since $W_V$ and every subsequent model parameter are also unchanged, the output of this attention head is identical. Applying the same argument throughout the autoregressive rollout shows that
\begin{equation}
\tilde{x}_{\theta^{(c)}}(z)
=
\tilde{x}_{\theta}(z)
\end{equation}
for every prefix $z$. This proves the first statement.

We now examine the parameter Jacobian. The attention scores depend on $W_Q$ and $W_K$ through
\begin{equation}
W_QW_K^\top.
\end{equation}
For a perturbation $\delta W_Q$ around the reparametrized point,
\begin{equation}
\delta
\left(
W_Q^{(c)}(W_K^{(c)})^\top
\right)
=
\delta W_Q
(c^{-1}W_K)^\top,
\end{equation}
so the sensitivity to $W_Q$ is multiplied by $c^{-1}$. Similarly, the sensitivity to $W_K$ is multiplied by $c$. Consequently,
\begin{equation}
J_Q^{(c)}
=
c^{-1}J_Q,
\qquad
J_K^{(c)}
=
cJ_K.
\label{eq:jacobian-rescaling}
\end{equation}

Let $J_{\mathrm{rest}}$ denote the Jacobian with respect to all remaining parameters. Since the forward computation is unchanged under this rescaling, their contribution is unchanged. Hence
\begin{align}
J_{\theta^{(c)}}J_{\theta^{(c)}}^\top
&=
J_{\mathrm{rest}}J_{\mathrm{rest}}^\top
+
c^{-2}J_QJ_Q^\top
+
c^2J_KJ_K^\top.
\label{eq:metric-c}
\end{align}
At $c=1$,
\begin{equation}
J_{\theta}J_{\theta}^\top
=
J_{\mathrm{rest}}J_{\mathrm{rest}}^\top
+
J_QJ_Q^\top
+
J_KJ_K^\top.
\end{equation}

Suppose, for contradiction, that these two matrices were equal for every $c>0$. Then
\begin{equation}
(c^{-2}-1)J_QJ_Q^\top
+
(c^2-1)J_KJ_K^\top
=
0
\end{equation}
for every $c>0$. Taking $c\rightarrow\infty$ implies
\begin{equation}
J_KJ_K^\top=0,
\end{equation}
and therefore $J_K=0$. Taking $c\rightarrow0$ similarly yields $J_Q=0$. This contradicts the nondegeneracy assumption in Theorem~\ref{thm:path-dependent}.

Therefore, whenever the attention head affects the generated solution nontrivially, there exists $c\neq1$ such that
\begin{equation}\label{eq:jacobian-kernel-diff}
J_{\theta^{(c)}}J_{\theta^{(c)}}^\top
\neq
J_{\theta}J_{\theta}^\top.
\end{equation}

Combining Eq. \ref{eq:jacobian-kernel-diff} with Theorem~\ref{thm:induced-flow}, the two functionally equivalent initializations satisfy
\begin{align}
&\dot{\tilde{x}}_{\theta}
=
K_{\theta}
\nabla_{\tilde{x}}F_\lambda(\tilde{x}),\\&
\dot{\tilde{x}}^{(c)}_\theta
=
K_{\theta^{(c)}}
\nabla_{\tilde{x}}F_\lambda(\tilde{x}),
\end{align}
where
\begin{equation}
K_{\theta}
\neq
K_{\theta^{(c)}}
\end{equation}
for an appropriate $c$. Thus two transformers representing the same current solution can induce different trajectory kernels solely because of how that solution is encoded in weight space.

We next show that there exists a quadratic objective
\begin{equation}
f(\tilde{x})
=
\tilde{x}^\top Q\tilde{x}
\end{equation}
for which the two transformers induce different instantaneous trajectories:
\begin{equation}
K_{\theta}\nabla f(\tilde{x})
\neq
K_{\theta^{(c)}}\nabla f(\tilde{x}).
\end{equation}

Let
\begin{equation}
D
:=
K_{\theta}
-
K_{\theta^{(c)}}.
\end{equation}
Since $D\neq0$, there exists some $g\in\mathbb{R}^n$ such that
\begin{equation}
Dg\neq0.
\label{eq:distinguishing-gradient}
\end{equation}

We show that $g$ can be realized as the gradient of a quadratic objective at $\tilde{x}$. Let
\begin{equation}
r=\tilde{x}^\top\tilde{x}>0
\end{equation}
and define the symmetric matrix
\begin{equation}
S
=
\frac{
g\tilde{x}^\top+\tilde{x}g^\top
}{r}
-
\frac{
\tilde{x}^\top g
}{r^2}
\tilde{x}\tilde{x}^\top.
\label{eq:construct-S}
\end{equation}
A direct calculation gives
\begin{equation}
S\tilde{x}=g.
\end{equation}
Set
\begin{equation}
Q=\frac{1}{2}S.
\end{equation}
Because $S$ is symmetric,
\begin{equation}
\nabla_{\tilde{x}}
\left(
\tilde{x}^\top Q\tilde{x}
\right)
=
(Q+Q^\top)\tilde{x}
=
S\tilde{x}
=
g.
\end{equation}
Therefore
\begin{align}
\dot{\tilde{x}}_{\theta}
-
\dot{\tilde{x}}_{\theta^{(c)}}
=
\left(
K_{\theta}
-
K_{\theta^{(c)}}
\right)g=
Dg
\neq0.
\end{align}
Thus the different geometries produced by the self-attention symmetry can lead to genuinely different trajectories for an objective of exactly the quadratic form considered in this proof.

\end{proof}
\end{document}